\newcounter{alphasect}
\def\alphainsection{0}
\let\oldsection=\section
\def\section{
  \ifnum\alphainsection=1
    \addtocounter{alphasect}{1}
  \fi
\oldsection}
\renewcommand\thesection{\ifnum\alphainsection=1\Alph{alphasect}\else\arabic{section}\fi}
\newenvironment{alphasection}{
  \ifnum\alphainsection=1
    \errhelp={Let other blocks end at the beginning of the next block.}
    \errmessage{Nested Alpha section not allowed}
  \fi
  \setcounter{alphasect}{0}
  \def\alphainsection{1}
}{
  \setcounter{alphasect}{0}
  \def\alphainsection{0}
}
\newtheorem{theorem}{Theorem}[section]
\newtheorem{lemma}[theorem]{Lemma}
\title{Policy Search by Target Distribution Learning for Continuous Control}
\author{
  Chuheng Zhang \\
  IIIS, Tsinghua University\\
  {zhangchuheng123@live.com} \\
  \And
  Yuanqi Li \\
  IIIS, Tsinghua University\\
  {timezerolyq@gmail.com} \\
  \And
  Jian Li \\
  IIIS, Tsinghua University\\
  {lapordge@gmail.com} \\
}
\begin{document}

\maketitle

\begin{abstract}
    It is known that existing policy gradient methods (such as vanilla policy gradient, PPO, A2C) may suffer from overly large gradients 
    when the current policy is close to deterministic, 
    leading to an unstable training process. 
    We show that such instability can happen even
    in a very simple environment.
	To address this issue, we propose a new method,	called \emph{target distribution learning} (TDL),
	for policy improvement in reinforcement learning. 
	TDL alternates between proposing a target distribution and training the policy network to approach the target distribution. TDL is more effective in constraining the KL divergence between updated policies, and hence 
	leads to more stable policy improvements over iterations. 
	Our experiments show that TDL algorithms perform comparably to (or better than) state-of-the-art algorithms 
	for most continuous control tasks in the MuJoCo environment
	while being more stable in training. 
\end{abstract}

\section{Introduction}

Reinforcement learning (RL) algorithms can be broadly divided into value-based methods and policy search methods. When applied to continuous control tasks, value-based methods, such as \cite{mnih2015human,schaul2015prioritized,wang2015dueling,van2016deep,dabney2018implicit},
need additional treatments
to convert the learned value function to executable policies \cite{gu2016continuous,novati2018remember}. 
On the other hand, policy search methods directly improve a policy for continuous control. Among others, policy gradient-based methods have been
shown to be quite effective in searching good policies, e.g., \cite{williams1992simple,sutton1998introduction,silver2014deterministic,lillicrap2015continuous,mnih2016asynchronous}.
These methods first compute the gradient of the performance measure with respect to the parameters of the policy network and then update the parameters via stochastic gradient ascent.
In order to ensure that the policy improves in terms of the performance measure over the iterations, the policy improvement theorem \cite{kakade2002approximately} suggests that the policy update should not be too large over one iteration.
Specifically, policy improvement requires a regularization on the \emph{state-action space} to avoid destructively large updates, i.e., the probability distributions over the action space of the old and new policies conditioned on a state should not vary too much.

Influenced by the policy improvement theorem,
several effective algorithms have been proposed in the literature. 
TRPO \cite{schulman2015trust} and ACKTR \cite{wu2017scalable} both update the policy subject to a constraint in the state-action space (trust region).
ACER \cite{wang2016sample} adopts a trust region optimization method that clips the policy gradient in the state-action space to constrain the policy update. PPO \cite{schulman2017proximal} designs a clipped surrogate objective that approximates the regularization.
PPO has been proven to be quite effective and is relatively simple-to-implement, thus becomes a quite popular method. 
In a very simple environment (see Section \ref{sec:instability}), we observe some weakness of PPO in our experiment:
when the policy is near-deterministic, the gradient may explode which leads to instability.
Moreover, PPO performs multiple epochs of minibatch updates on the same samples to fully utilize the samples. We observe significant performance degradation when increasing the sample reuse (see Section \ref{chap:exp1}). 

We propose a new policy search method, called \emph{target distribution learning (TDL)}, that improves the policy over iterations. In each iteration, the \emph{action distribution} conditioned on each encountered state is produced by the policy network. The actions are drawn from these distributions and used to interact with the environment. Then, TDL proposes better action distributions (called \emph{target distributions}) to optimize the performance measure and updates the policy network to approach the target distributions. 

The contributions of our work are summarized as follows: 
\begin{enumerate}
    \item 
    (Section \ref{sec:instability})
    We show experimentally that PPO (even combined with some commonly used remedies) suffers from an instability issue during the training even in a very simple environment.
    \item
    (Section \ref{sec:TDL}) We propose a new policy improvement method TDL that
    retains the simplicity of PPO
    while avoids the instability issue. 
    We also propose three algorithms based on TDL, all of which set target distributions within a trust region and thus ensure the policy improvement. 
    We provide theoretical guarantee that the target distribution 
    is close to the old distribution in 
    terms of KL divergence (Appendix \ref{app:KL_bound}).
    Two algorithms set the target distributions following an update rule of evolutionary strategy (ES) \cite{rechenberg1973evolutionsstrategie}. 
    Unlike previous work (such as \cite{salimans2017evolution,mania2018simple,liu2019trust}) which used ES to search over the parameter space directly, we incorporate the idea in ES to propose better action distributions.
    Moreover, 
    we show that the target distributions proposed by one of our algorithms based on ES indicate a desirable direction
    and illustrate that the algorithm can better
    prevent premature convergence (Appendix \ref{app:ES_theory}). 
    \item (Section \ref{sec:exp}) We conduct several 
    experiments to show that our algorithms perform comparably to (or better than) several state-of-the-art algorithms on benchmark tasks. Moreover, we show that our algorithms are more effective to realize a regularization in the state-action space than TRPO and PPO, and can increase the sample reuse without significant performance degradation.
\end{enumerate}

\section{Preliminaries}
\label{sec:prelim}

A Markov Decision Process (MDP) for continuous control is a tuple $(\mathcal{S}, \mathcal{A}, P, R, \gamma)$ specifying the state space $\mathcal{S}$, 
the continuous action space $\mathcal{A} \subseteq \mathbb{R}^d$, 
the state transition probability $P(s_{t+1}|s_t, a_t)$, 
the reward $R(r_t|s_t,a_t)$ and the discount factor $\gamma$. 
Let $\pi$ denote a stochastic policy $\pi: \mathcal{S}\times\mathcal{A} \to [0, 1]$. 
In this paper, it is specified by a probability distribution whose statistical parameter is given by a neural network with parameter $\theta$, 
i.e., $\pi(a_t|\phi_\theta(s_t))$ where $\phi_\theta(s_t)$ denotes the statistical parameter (e.g., the mean and standard deviation of a Gaussian). 
We call this probability distribution \emph{action distribution}. 
The value function is defined as $V^\pi(s) := \mathbb{E} [\sum_{t=0}^\infty \sum_{s'} p(s_t = s'| s_0 = s, \pi_\theta) \gamma^t r_t]$ 
for each $s\in\mathcal{S}$. 
The corresponding Q-function is defined as $Q^\pi(s, a) := \mathbb{E} [\sum_{t=0}^\infty \sum_{s'} p(s_t = s'| s_0 = s, a_0 = a, \pi_\theta) \gamma^t r_t] $ 
for each $s\in\mathcal{S}$ and $a\in\mathcal{A}$. 
The advantage function for each action $a$ in state $s$ is defined as $A^\pi(s,a) = Q^\pi(s,a) - V^\pi(s)$. 
The goal is to maximize the expected cumulative reward from an initial state distribution, 
i.e., $\max_\pi \eta(\pi) := \mathbb{E}_{s_0} [V^\pi (s_0)]$.

In this paper, we use multivariate Gaussian distributions with diagonal covariance matrices as the action distributions 
for the stochastic policy. 
In this case, the statistical parameter $\phi_\theta(s)$ has two components, the action mean $\mu_\theta(s) \in \mathbb{R}^d$ and the diagonal elements of covariance matrix (variance) $\sigma^2_\theta(s) \in \mathbb{R}^d$. In each iteration, the new policy $\pi(a|s) = \mathcal{N}(a|\mu_\theta(s), \sigma_\theta(s))$ is updated from an old policy $\pi^{\text{old}}(a|s) = \mathcal{N}(a|\mu^{\text{old}}(s), \sigma^{\text{old}}(s))$.

In each iteration, the policy network is updated to maximize the following surrogate objective subject to a Kullback--Leibler (KL) divergence \cite{kullback1951information} constraint to prevent destructively large updates. The formulation is first proposed by \cite{schulman2015trust} based on \cite{kakade2002approximately}.
$$
L(\theta) = \mathbb{E}_{s \sim \rho_{\pi^{\text{old}}}} \left[ \sum_a \mathcal{N}(a|\mu_\theta(s), \sigma_\theta(s)) A^{\pi^{\text{old}}} (s, a) \right]
$$
\begin{equation}
    s.t. \, \max_{s\in\mathcal{S}} KL\left(\mathcal{N}(\mu^{\text{old}}(s), \sigma^{\text{old}}(s))||\mathcal{N}(\mu_\theta(s), \sigma_\theta(s))\right) \le \delta ,
    \label{eq:policy_improve}
\end{equation}
where $\rho_\pi(s) = \mathbb{E}_{s_0} [ \sum_{t=0}^\infty \gamma^t p(s_t = s | s_0, \pi) ]$ is the state visitation frequency and $\text{KL}(\cdot||\cdot)$ indicates the KL divergence between two probability distributions. When $\delta$ is small, a solution of the above 
optimization problem can guarantee 
a policy improvement over the iteration, 
i.e., $\eta(\pi) > \eta(\pi^{\text{old}})$ \cite{schulman2015trust}.

The above optimization objective can be approximated using Monte Carlo samples as follows:
\begin{equation}
\label{eq:mc_loss}
    \hat{L}(\theta) = \dfrac{1}{T} \sum_{t=1}^T \left[ \hat{A}_t \dfrac{\mathcal{N}(a_t|\mu_\theta(s_t), \sigma_\theta(s_t))}{\mathcal{N}(a_t|\mu^{\text{old}}(s_t), \sigma^{\text{old}}(s_t))} \right] ,
\end{equation}
where $s_t$ and $a_t$ are the samples of the state and the action, respectively, at timestep $t$ following $\pi^{\text{old}}$.
$\hat{A}_t:=\hat{A}^{\pi^{\text{old}}}(s_t, a_t)$ is an estimator of the advantage function at timestep $t$. One popular choice is to use generalized advantage estimator (GAE) \cite{schulman2015high} as the advantage estimator. We note that TRPO and PPO are based on the above formulation.

\section{Instability Issue of Previous Methods}
\label{sec:instability}

In this section, we show that the gradient of the objective $\hat{L}(\theta)$ (in \eqref{eq:mc_loss}) with respect to $\theta$ may explode when the policy is near-deterministic, i.e., $\sigma_\theta(\cdot)$ is small, which may lead to instability in training. 

Let us consider a case where the standard deviation of the action distribution $\sigma$ is state independent and thus itself is a parameter of the policy network. Define $\hat{L}_t(\theta) = \hat{A}_t \dfrac{\mathcal{N}(a_t|\mu_\theta(s_t), \sigma)}{\mathcal{N}(a_t|\mu^{\text{old}}(s_t), \sigma^{\text{old}})}$. By the standard chain rule, one can
see that the gradient with respect to $\theta$ is as follows: 
\begin{equation}
\dfrac{\partial \hat{L}(\theta)}{\partial \theta} = \dfrac{1}{T} \sum_{t=1}^T \left[\hat{L}_t(\theta) \dfrac{\partial \log \mathcal{N}(a_t|\mu_\theta(s_t), \sigma)}{\partial \mu_\theta(s_t)} \dfrac{\partial \mu_\theta(s_t)}{\partial \theta} \right].
\end{equation}
Moreover, the gradient of the logarithm of the probability density with respect to the mean is
\begin{equation}
\dfrac{\partial \log \mathcal{N}(a_t|\mu_\theta(s_t), \sigma)}{\partial \mu_\theta(s_t)} = \dfrac{a_t - \mu_\theta(s_t)}{\sigma^2}.
\end{equation}
Therefore, the gradient with respect to $\theta$ is inversely proportional to $\sigma$ for a typical sample $a_t$.
When the policy is near-deterministic, i.e., $\sigma$ is small, the gradient with respect to $\theta$ becomes large. So, it is likely that, given a state $s$, the mean of the action distribution conditioned on this state $\mu_\theta(s)$ is updated to a place far away from the previous mean $\mu^{\text{old}}(s)$,
which may already be close to optimal. 
This thus leads to a \say{bad} action in the next iteration. Notice that other policy gradient-based algorithms involving the gradient of a probability density function (such as vanilla policy gradient, A2C) may suffer from the same issue. \footnote{TRPO does not suffer from such issue since it performs a line search for the step size.} \cite{zhao2011analysis} has similar results that the variance of the policy gradient update is inversely proportional to the square of the standard deviation in two policy gradient algorithms, REINFORCE \cite{williams1992simple} and PGPE \cite{gruttner2010multi}.

Now, we experimentally show that PPO suffers from such instability issue
even in a simple environment as follows: In each round, the environment samples a state $s\sim U([0,1]^1)$. The agent receives the state, performs an action $a\in\mathbb{R}^1$ and suffers a cost (negative reward) $c(a) = a^2$. The objective is to minimize the one-step cost, i.e., $\min_\theta \mathbb{E}[\sum_a \pi_\theta(a|s) c(a)]$. Notice that the cost is independent of the state but the state is still fed as an input to the policy network. It is obvious that the optimal policy should play $a=0$ with probability 1 for any state, which is a deterministic policy. Our experiment shows that PPO suffers from the aforementioned instability issue, resulting in an oscillating and diverging behavior. 
On the other hand,  our new method TDL (see Section \ref{sec:TDL}) circumvents the computation of the gradient of a probability density function, hence does not suffer from such instability issue.
In practice, there are some common tricks that attempt to solve the instability issue,
such as adding an entropy term in the loss function, setting a minimum variance threshold below which the variance of the action distribution is not allowed to decrease further, or using a small clip constant in PPO. 
We also adopt these tricks for PPO and compare them with our method. The entropy term can stabilize the training 
but leads to a worse asymptotic performance. The minimum variance
constraint can achieve a smaller cost but is still unstable. A small clip constant may delay but does not prevent the instability. 
We show the result in Figure \ref{fig:convex}.

\begin{figure}[htbp]
   \centering
   \includegraphics[width=0.95\columnwidth]{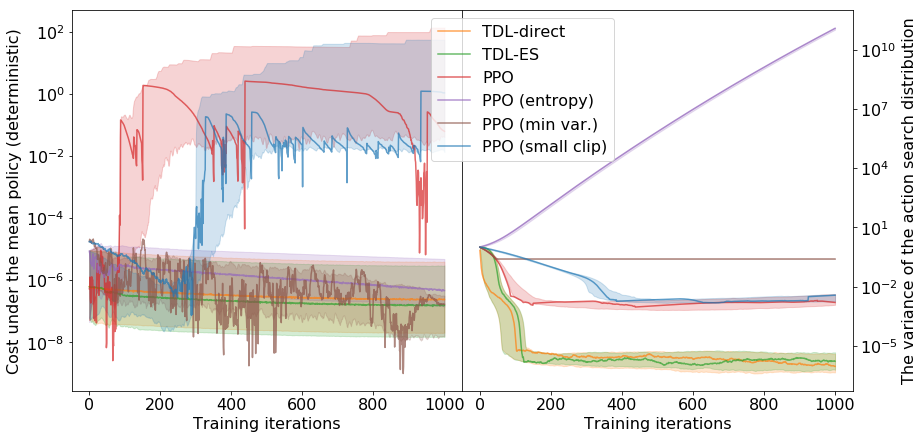}    
   \caption{Median performance out of 100 independent runs for each algorithm in the simple environment described in Section \ref{sec:instability}. The shaded areas indicate the 10\% and 90\% quantiles. Left. The cost from executing the mean of the action distribution along the training. Right. The variance of the action distribution along the training.}
   \label{fig:convex}
\end{figure}

\section{Target Distribution Learning}
\label{sec:TDL}

Instead of optimizing the objective $L(\theta)$ with respect $\theta$ directly, TDL solves the constrained optimization problem in two steps: TDL first proposes statistical parameters that specify action distributions (targets) under which the expected advantage function of the old policy is improved. Then, TDL trains the policy network to match the targets.

In the first step, for each state sample $s_t$, TDL proposes a target distribution whose statistical parameters 
attempt to maximize $L_{t,1}(\mu, \sigma)$, the surrogate objective on state $s_t$ (cf. equation \eqref{eq:policy_improve}), where
\begin{equation}
L_{t,1}(\mu, \sigma) = \mathbb{E}_{a\sim \mathcal{N}(\mu, \sigma)}\left[ A^{\pi^{\text{old}}}(s_t, a) \right]
\label{eq:opt1}
\end{equation}
or maximize $L_{t,2}(\mu, \sigma)$, the probability that the target policy is improved over the old value function, where 
\begin{equation}
L_{t,2}(\mu, \sigma) = \mathbb{E}_{a\sim \mathcal{N}(\mu, \sigma)}\left[ \mathbb{I}\{ A^{\pi^{\text{old}}}(s_t, a) > 0 \} \right]
\label{eq:opt2}
\end{equation}
while being subject to the following constraint:
\begin{equation}
KL(\mathcal{N}(\cdot|\mu^{\text{old}}(s_t), \sigma^{\text{old}}(s_t))||\mathcal{N}(\cdot|\mu, \sigma)) \le \delta.
\label{eq:opt_constr}
\end{equation}

In the second step, the policy network learns to match the proposed targets by minimizing the mean squared error with respect to these target statistical parameters.

Notice that typically only one estimate $\hat{A}_t := \hat{A}^{\pi^{\text{old}}}(s_t, a_t)$ is known. Therefore, the above optimization problems cannot be solved exactly and there is a tradeoff between exploitation and exploration for the target distribution, i.e., we can move the target distribution to the \say{best} area indicated by the estimate and shrink the variance of the distribution (exploit) or we can increase the variance for a better estimation of the advantage function (explore). In policy gradient methods, the mean and the variance of the action distribution are updated jointly subjected to the law given by the gradient of the probability density function. However, in TDL, the mean and the variance can be updated independently to implement different action search strategies. 

Next, we propose three algorithms based on TDL, \emph{TDL-direct}, \emph{TDL-ES} and \emph{TDL-ESr}. The three algorithms differ in the way they propose target distributions. 
The pseudocode for TDL is shown in Algorithm \ref{algorithm}, the details of which are described in the following paragraphs.

\begin{algorithm*}
\caption{Target learning}
\label{algorithm}
\begin{algorithmic}[1]
\State Number of timesteps in one iteration $T$, minibatch size $M$, number of epochs $E$
\State Initialize the action distribution of the policy $\mathcal{N}(\mu_\theta(\cdot), \sigma_\theta(\cdot) = \sigma^{1 / (\varphi + 1)} \tilde{\sigma}_\theta(\cdot)^{\varphi / (\varphi + 1)})$
\State Initialize the critic network $V_\phi(\cdot)$
\For {$i=0,1,2,\cdots$} 
	\State Interact with the environment and obtain $T$ on-policy transitions $\{(s_t, a_t, r_t, s_{t+1})\}$
	\State Calculate the Monte Carlo return for each transition $R_t = \sum_{t'=t}^T \gamma^{t'-t} r_{t'}$
	\State Calculate the advantage function estimate $\hat{A}_t$ for each transition (by GAE)
	\State Calculate the target standard deviation $\hat{\sigma}_t$ following \eqref{eq:state_dependent_variance} 
	\State \textbf{if} TDL-direct \textbf{then} calculate the target means $\hat{\mu}_t$ following (\ref{eq:mean_direct}) 
	\State \textbf{if} TDL-ESr \textbf{then} revise the sampled actions following \eqref{eq:revise}
	\State \textbf{if} TDL-ES \textbf{or} TDL-ESr \textbf{then} calculate the target means $\hat{\mu}_t$ following (\ref{eq:mean_ES})
	\For {$j=1:ET/M$}
		\State Sample a minibatch that contains $M$ transitions
		\State Update the policy network to minimize $\frac{1}{M}\sum_{t=1}^M(\hat{\mu}_t - \mu_\theta(s_t))^2$ and $\frac{1}{M}\sum_{t=1}^M(\hat{\sigma}_t - \tilde{\sigma}_\theta(s_t))^2$ on the minibatch
		\State Update the critic network to minimize $\frac{1}{M}\sum_{t=1}^M(R_t - V_\phi(s_t))^2$ on the minibatch
	\EndFor
	\State Update $\sigma$ to $\hat{\sigma}$ defined in \eqref{eq:state_independent_variance} and $\sigma_\theta(\cdot)$ following \eqref{eq:variance_formula}
\EndFor
\end{algorithmic}
\end{algorithm*}

\subsection{Target variance}

The three algorithms update the variance in the same way described as follows. The update rule allows an adaptation for exploration while changes the variance slowly which prevents premature convergence and violation of the constraint.

Inspired by the self-adaption technique \cite{hansen2001completely,hansen2000invariance}, given one state sample, action sample and the corresponding advantage estimate $(s_t, a_t, \hat{A}_t)$, the target variance is proposed as:
\begin{equation}
\label{eq:state_dependent_variance}
\hat{\sigma}_t^2 =
(a_t - \mu^{\text{old}}(s_t))^2 \mathbb{I}\{\hat{A}_t > 0\} + (\sigma^{\text{old}}(s_t))^2 \mathbb{I}\{\hat{A}_t \le 0\},
\end{equation}
where $\mu^\text{old}(\cdot)$ and $\sigma^\text{old}(\cdot)$ are the values of $\mu_\theta(\cdot)$ and $\sigma_\theta(\cdot)$ in the last iteration.
When the advantage estimate is positive, if the action sample is within the one-sigma range, the target variance will be smaller than the old variance, otherwise it gets larger than the old variance for further exploration. When the advantage estimate is negative, the target variance remains the same as the old variance. Notice that when the advantage estimator is non-informative, the target variance remains the same as the old in expectation, i.e., $\mathbb{E}_{a_t \sim \mathcal{N}(\mu^{\text{old}}(s_t), \sigma^{\text{old}}(s_t))} [\hat{\sigma}_t^2] = \sigma^{\text{old}}(s_t)^2$. This prevents a drift of the action distribution when the critic is not well learned.

To prevent large variance update that may lead to a violation of the KL divergence constraint, the variance on a state $s_t$ is designed to be the average of a state independent component and a state dependent component as follows:
\begin{equation}
\label{eq:variance_formula}
\sigma_\theta(s_t) = \sigma^{1 / (\varphi + 1)} \tilde{\sigma}_\theta(s_t)^{\varphi / (\varphi + 1)}
\end{equation}
where $\varphi > 0$ is a hyperparameter that controls the degree to which the variances on different states are updated independently. $\tilde{\sigma}_\theta(s_t)$ is the state dependent component which is a neural network trained to minimized the MSE w.r.t. $\hat{\sigma}_t$. $\sigma$ is the state independent component and directly updated to $\hat{\sigma}$ in each iteration, where
\begin{equation}
\label{eq:state_independent_variance}
\hat{\sigma}^2 = \dfrac{1}{T}\sum_{t=1}^T \hat{\sigma}_t^2.
\end{equation}

The state independent component is based on all the $T$ samples in the iteration and therefore allows a global adaptation for exploration while changes slowly. With the state independent component, the variance changes slowly in one iteration. Empirically in our later experiments on Mujoco \cite{todorov2012mujoco}, with $\varphi =1$ and $T=2048$, each dimension of $\sigma_\theta(s_t) / \sigma^{\text{old}}(s_t)$ for any state falls within $[1-\epsilon, 1+\epsilon]$ for a $\epsilon \le 0.01$. 

\subsection{TDL-direct algorithm}

Consider a state sample $s_t$, an action sample $a_t$ and its advantage estimate $\hat{A}_t$. Given that the update of the variance is small in each iteration, TDL-direct sets the target mean $\hat{\mu}_t$ to $\mu$ that maximizes $\mathcal{N}(a_t\,| \mu, \sigma^\text{old}(s_t)) \hat{A}_t$ (i.e., the Monte Carlo estimate of $L_{t,1}$) subject to the constraint in \eqref{eq:opt_constr}. 

Recall that the action $a_t$ is sampled from $\mathcal{N}(\mu^{\text{old}}(s_t),  \sigma^{\text{old}}(s_t))$. Hence,
we can write $a_t = \mu^{\text{old}}(s_t) + y_t \sigma^{\text{old}}(s_t)$,
where $y_t\sim \mathcal{N}(0,I)$.
Let $\alpha > 0$ be a hyperparameter controlling the size of the trust region. The target mean for the sample at timestep $t$ is proposed as
\begin{equation}
\label{eq:mean_direct}
\hat{\mu}_t = \mu^{\text{old}}(s_t) + 
\text{sign} (\hat{A}_t) \min \left(1, \dfrac{\sqrt{2\alpha}}{\|y_t\|_2}\right) 
y_t \sigma^{\text{old}}(s_t), 
\end{equation}
where $\text{sign} (\cdot)$ is the sign function.
When $\hat{A}_t > 0$, $\mathcal{N}(a_t\,| \mu, \sigma) \hat{A}_t$ can be maximized by setting $\mu = a_t$. When $\hat{A}_t \le 0$, it is preferred that $\mu$ should be as far away from $a_t$ as possible. However, this may violate the constraint in (\ref{eq:opt_constr}). Thus, we clip the amount of the change from $\mu^{\text{old}}(s_t)$ to $\hat{\mu}_t$ such that $KL(\mathcal{N}(\cdot|\mu^{\text{old}}(s_t), \sigma^{\text{old}}(s_t))||\mathcal{N}(\cdot|\hat{\mu}_t, \sigma(s_t))) \le d \alpha (1+2\epsilon) + o(\epsilon^2) \approx d \alpha$, 
recalling that the standard deviation changes within $[1-\epsilon, 1+\epsilon]$ in each iteration.
In Appendix \ref{app:KL_bound}, we show that the above clip operation
can guarantee that the KL constraint is satisfied (by leveraging 
the fact that the KL divergence between two Gaussian distributions has a closed-form formula).

\subsection{TDL-ES algorithm}

$(1+1)$-ES, one of the evolutionary strategies (ES) \cite{beyer2002evolution}, can be used to maximize $L_{t,1}$. 
It is a family of optimization techniques for finding a distribution $\mathcal{D}$ that maximizes $\mathbb{E}_{x\sim\mathcal{D}} [f(x)]$, for a blackbox \emph{fitness} function $f$.
Natural evolutionary strategy (NES) \cite{wierstra2014natural} provides an algorithm based on $(1+1)$-ES that iteratively updates a Gaussian distribution to optimize the objective along a natural gradient direction. Specifically, in each iteration, an \emph{offspring} $x$ is sampled from the Gaussian distribution $\mathcal{N}(\mu^{\text{old}}, \sigma^{\text{old}})$ centering at the \emph{parent} $\mu^{\text{old}}$ and the distribution is updated based on the comparison between the fitness of the offspring $f(x)$ and that of the parent $f(\mu^{\text{old}})$. 

We observe that the objective $L_{t,1}$
is essentially the same as the objective for 
$(1+1)$-ES. By letting $\mu^{\text{old}} = \mu^{\text{old}}(s_t)$ and $x = a_t$, $\hat{A}_t$, which is an estimate of $Q^{\pi^{\text{old}}}(s_t, a_t) - V^{\pi^{\text{old}}}(s_t)$, can be used to indicate $f(x) - f(\mu^{\text{old}})$. 
In this way, the optimization problem defined in \eqref{eq:opt1} can be solved by $(1+1)$-ES. Therefore, the target mean can be proposed by the update rule for NES, as follows:
\begin{equation}
\label{eq:mean_ES}
\hat{\mu}_t = \mu^{\text{old}}(s_t) + \nu \mathbb{I}\{ \hat{A}_t > 0 \} (a_t - \mu^{\text{old}}(s_t)),
\end{equation}
where $\nu \in (0, 1]$ is the step size. For the 
update of the variance, we still use the aforementioned update rule. 
\footnote{
We found that the variance easily explodes following the NES rules to set the target variance in RL context. 
Hence, we choose to keep using the update rules described in \eqref{eq:state_dependent_variance} and \eqref{eq:state_independent_variance},
where \eqref{eq:state_dependent_variance} can be regarded as the first order Taylor approximation of the NES rules for the variance update.
}

TDL-ES algorithm with target statistical parameters defined in \eqref{eq:state_dependent_variance}, \eqref{eq:state_independent_variance} and \eqref{eq:mean_ES} has the following properties. 

First, for a typical action sample $a_t$, the proposed target statistical parameters satisfy the constraint in \eqref{eq:opt_constr}, i.e., $\mathbb{E} [ KL(\mathcal{N}(\cdot|\mu^{\text{old}}(s_t), \sigma^{\text{old}}(s_t))||\mathcal{N}(\cdot|\hat{\mu}_t, \sigma(s_t))) ] \le \frac{1}{2} d \nu^2 (1+2\epsilon) + o(\epsilon^2) \approx \frac{1}{2} d \nu^2$ (cf. Appendix \ref{app:KL_bound}). 

Second, \eqref{eq:mean_ES} and \eqref{eq:state_dependent_variance} can be regarded as a stochastic gradient ascent step w.r.t. $L_{t, 2}$ for some step sizes $\lambda_\mu$ and $\lambda_\sigma$ (cf. Appendix \ref{app:ES_theory}), i.e., 

\begin{equation}
\label{eq:SGA2}
\hat{\mu}_t = \mu^{\text{old}}(s_t) + \lambda_\mu \left. \dfrac{\partial L_{t,2}(\mu, \sigma^{\text{old}}(s_t))}{\partial \mu}\right\vert_{\mu=\mu^\text{old}(s_t)},
\end{equation}

\begin{equation}
\label{eq:SGA1}
\hat{\sigma}_t^2 = (\sigma^{\text{old}}(s_t))^2 + \lambda_\sigma \left. \dfrac{\partial L_{t,2}(\mu^{\text{old}}(s_t), \sigma)}{\partial \sigma^2}\right\vert_{\sigma=\sigma^\text{old}(s_t)}.
\end{equation}

Third, consider one policy improvement step in TDL-ES and denote $D:=\{ a | Q^{\pi^{\text{old}}}(s_t, a) > V^{\pi^{\text{old}}}(s_t) \}$ for a state $s_t$ which represents the \say{good} areas in the action space indicated by the value functions of the old policy. TDL-ES updates the standard deviation of the action distribution towards the (truncated) \say{radius} of $D$ and the mean of the action distribution towards the \say{center} of $D$. This is appealing, since when the actor performs poorly (leading to a small $V(s_t)$ and a large $D$), it keeps exploring. In addition, when the critic estimate is overly large or small (leading to very large or very small $D$),
the action distribution remains the same in expectation. In contrast, a vanilla policy gradient method under a similar setting updates the variance of the action distribution towards zero and the mean of the action distribution towards $arg\max_a Q^{\pi^{\text{old}}}(s_t, a)$. This may lead to premature convergence. 
See the detail in Appendix \ref{app:ES_theory}.

\subsection{TDL-ESr algorithm}

Both TDL-direct and TDL-ES propose the target mean based solely on the action sample $a_t$ from the state $s_t$ and ignore the temporal structure of MDP. According to the observation that the state representation does not change too fast in adjacent steps, we can revise the formulation for the target mean in TDL-ES (i.e., \eqref{eq:mean_ES}) by the information of $2N+1$ adjacent samples $(a_{t+t'}, \hat{A}_{t+t'}), t' \in [-N, N]$, resulting in a revised version of TDL-ES which we call TDL-ESr. The revised formula is the same as \eqref{eq:mean_ES}, except that we substitute $a_t$ with $\tilde{a}_t$. 
Suppose $a_t = \mu^{\text{old}}(s_t) + y_t \sigma^{\text{old}}(s_t)$ is obtained by sampling $y_t \sim \mathcal{N}(0, I)$. For a revising ratio $r\in [0, 1]$, $\tilde{a}_t$ can be defined as $\tilde{a}_t = \mu^{\text{old}}(s_t) + \tilde{y}_t \sigma^{\text{old}}(s_t)$, where
\begin{equation}
\begin{aligned}
\tilde{y}_t & = (1-r) y_t + r y'_t, \\
y'_t & = \dfrac{ \sum_{t'=-N}^N y_t \max(0, \hat{A}_{t+t'}) }{ \sum_{t'=-N}^N \max(0, \hat{A}_{t+t'}) },
\label{eq:revise}
\end{aligned}{}
\end{equation}
Recall that, in TDL-ES, the mean of the action distribution moves to the direction indicated by $y_t$. This revision makes the mean update tilt to a direction $y'_t$ indicated by adjacent \say{good} samples, i.e., samples with $\hat{A}_{t+t'} > 0$. Consider a case where an action sample $a_t$ yields a large reward and results in a large $\hat{A}_t$, indicating that this is potentially a good direction. In TDL-ESr, the mean updates on the adjacent states will tilt towards this direction. 
This yields a directional exploration.

\section{Related Work}
\label{sec:related_work}

\textbf{Conservative policy iteration.} TDL follows conservative policy iteration (CPI) \cite{kakade2002approximately,scherrer2014approximate,agarwal2019optimality}, which suggests a small policy update in each iteration to ensure monotonic policy improvement and avoid oscillations. Several previous methods \cite{schulman2015trust,schulman2017proximal,wu2017scalable,novati2018remember} also follow CPI and constrain the \emph{parameters} of the policy directly. 
Instead, ACER \cite{wang2016sample} clips the gradient on the \emph{statistical parameters} produced by the policy network. 
Similarly, TDL 
also constrains the policy update in the statistical parameter space 
but proposes target distributions that satisfy the constraint. 
Safe policy iteration \cite{pirotta2013safe} proposes to update the policy with different step sizes on different states to ensure policy improvement and is characterized by faster convergence and guaranteed policy improvement. TDL benefits from the similar idea but with a simpler rule to determine the step sizes on different states.

\noindent\textbf{Setting targets.} Setting target policies is an effective way to guide the learning of the policy network.
In large discrete action domain, AlphaGo Zero \cite{silver2017mastering} uses MCTS to propose a categorical distribution as the target for the policy network to learn. In continuous action domain, MPO \cite{abdolmaleki2018maximum} and Relative Entropy Regularized Policy Iteration \cite{abdolmaleki2018relative} set the target action distribution to be an improvement over the estimated Q-function. TDL uses a state value function which is typically easier to estimate than a Q-function,
and perform in more robust manner upon near-deterministic policies, as shown
in our experiments.

\noindent\textbf{Instability problem.} 
The analysis on the variance of the policy gradient update in 
\cite{zhao2011analysis} implies the instability problem on near-deterministic policies.
In addition, \cite{van2007reinforcement,hamalainen2018ppo} state that updating along the negative direction upon a \say{bad} action sample may cause instability. We empirically show that this may lead to instability but can accelerate the learning process when the action space dimension or the step size is small (cf. Appendix \ref{sec:neg_adv}).

\section{Experiments}
\label{sec:exp}

We conduct several experiments in order
to demonstrate the following:
1) The performance of our algorithms 
on continuous control benchmark tasks is comparable with
(or better than) the state-of-the-art algorithms; 
2) We can safely increase the on-policy sample reuse without damaging the performance of our algorithms;
3) Our algorithms can constrain the maximum KL divergence across the state space more effectively than TRPO and PPO.

\begin{figure*}[htb]
   \centering
   \includegraphics[width=0.8\textwidth]{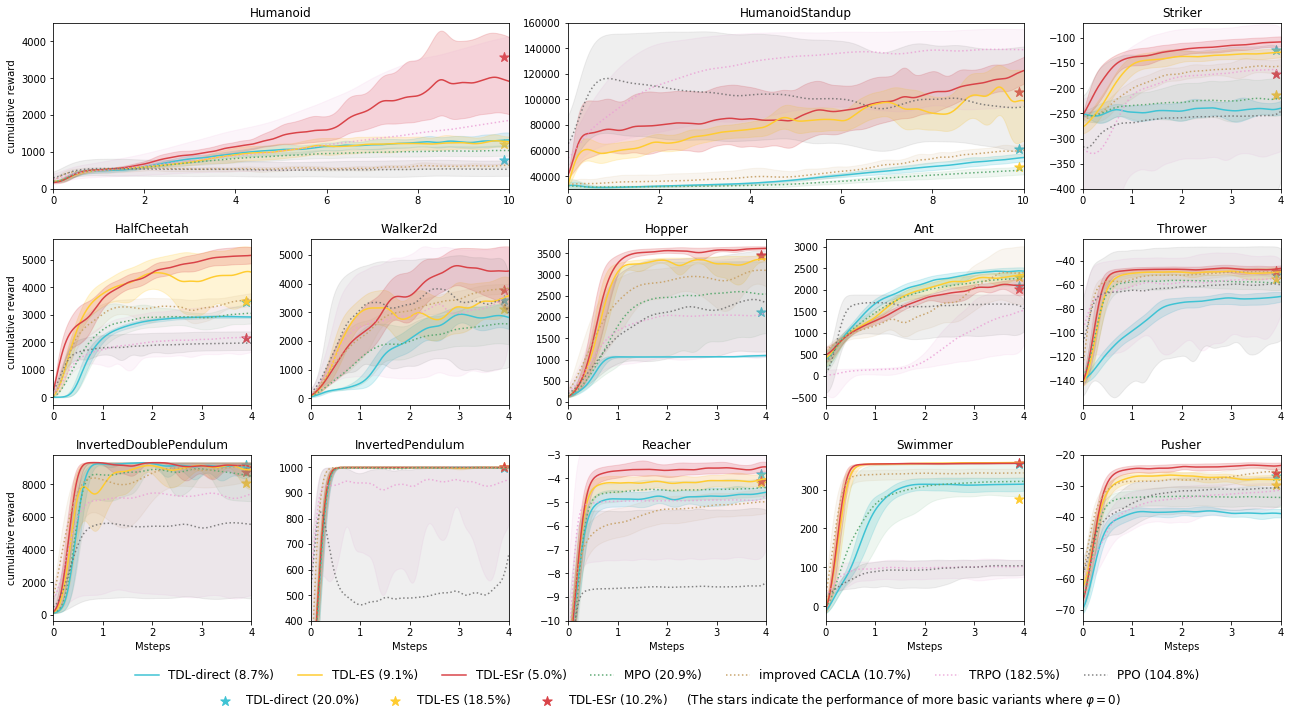}
   \caption{Comparison of several algorithms on MuJoCo tasks. The lines indicate the moving average across five independent runs and the shaded areas indicate the 10\% and 90\% quantiles. The percentage numbers in the legend indicate the normalized fluctuation of the scores in the last 100 iterations averaged over all the tasks.}
   \label{fig:benchmark}
\end{figure*}

\subsection{Performance on continuous control benchmarks}
\label{chap:exp4}

We implemented TDL-direct, TDL-ES and TDL-ESr for the continuous control tasks provided by OpenAI Gym \cite{1606.01540} using MuJoCo simulator \cite{todorov2012mujoco}. 
Due to space limit, the detailed setting of 
hyperparameters can be found 
in Appendix \ref{app:hyperparameters}.
In our experiments, we compare our algorithms against 
TRPO \cite{schulman2015trust} and PPO \cite{schulman2017proximal} (the clipped version) which are two popular policy gradient-based algorithms. 
We also compare our algorithms with MPO \cite{abdolmaleki2018maximum} 
and an improved version of CACLA \cite{van2007reinforcement}. 
Our algorithms differ from MPO in the way we set target distributions. 
Specifically, we set target statistical parameters, whereas MPO sets target probability density values on action samples based on an estimated Q-function.
The improved version of CACLA is actually an ablated version of TDL-ES without setting targets, i.e., the policy network is directly updated by stochastic gradient ascent on $L_{t,2}$.
We show the results in Figure \ref{fig:benchmark}. 

We can see that at least one of our algorithms outperform the previous algorithms in most tasks. 
TDL-ES performs better than the improved CACLA as setting targets prevents destructively large updates. 
Our algorithms generally outperform MPO which illustrates the effectiveness of the way we set target distributions.
Moreover, the performance fluctuation during the training (and especially at the end of the training) for our algorithms is typically small (see also the normalized fluctuation in the legend of Figure \ref{fig:benchmark}), 
which indicates that the training processes of our algorithms are more stable and steadily improved over iterations. 
This also indicates that our algorithms are robust across different seeds.
For tasks that require a precise control such as \texttt{Reacher} and \texttt{InvertedPendulum}, our algorithms result in a higher average cumulative reward than TRPO and PPO.
This is due to the fact that our algorithms address the instability issue illustrated in Section \ref{sec:instability}. 
In particular, TDL-ESr performs the best on most tasks, especially the ones with large action space dimensions (such as \texttt{Humanoid}).

\subsection{On-policy sample reuse}
\label{chap:exp1}

\begin{figure*}[h!]
   \centering
   \includegraphics[width=0.8\textwidth]{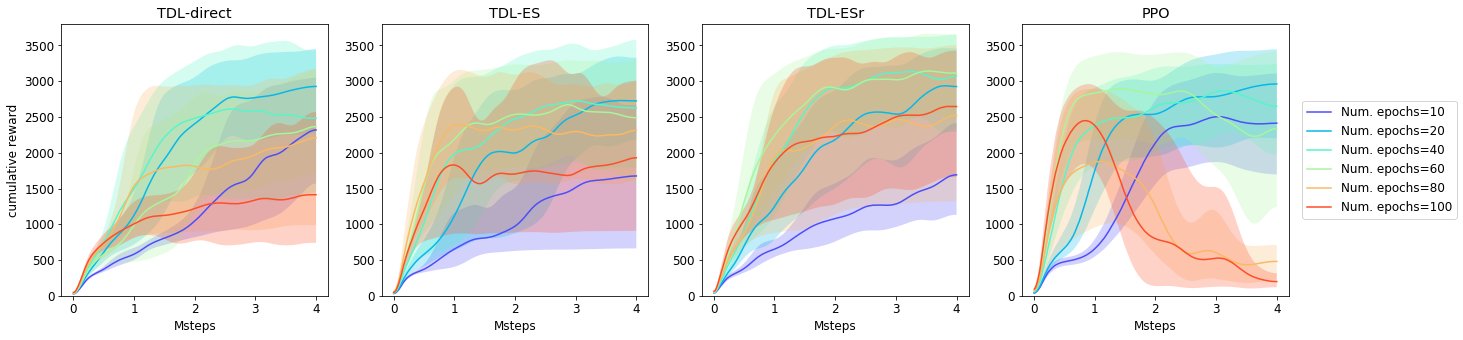}    
   \caption{Performance of TDL-direct, TDL-ES, TDL-ESr and PPO on \texttt{Hopper-v2} task with different levels of sample reuse. \texttt{Num.} \texttt{epochs} denotes the average number of times that a sample is used the for the neural network update. The lines indicate the moving average across five independent runs and the shaded areas indicate the 10\% and 90\% quantiles.}
   \label{fig:epochs}
\end{figure*}

\begin{figure*}[h!]
   \centering
   \includegraphics[width=0.7\textwidth]{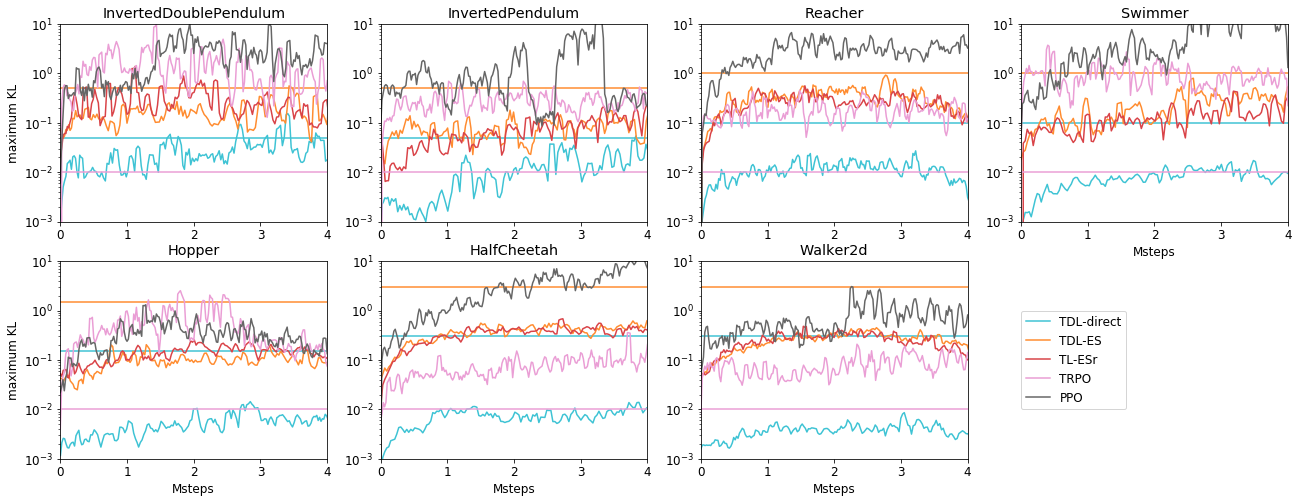}
   \caption{The maximum KL divergence during the training on different tasks. The mean KL constraint in TRPO is set to $\delta=0.01$ and the level is indicated by the straight pink lines (bottom). The maximum KL bounds on samples for TDL-direct and TDL-ES(r) are indicated by the straight blue lines (middle) and the straight orange lines (top) respectively.}
   \label{fig:KL}
\end{figure*}

Unlike off-policy algorithms that use past experiences to improve sample efficiency, on-policy algorithms can improve the sample efficiency by learning more epochs on the same on-policy samples. We compare our algorithms against PPO with different level of sample reuse and show the result in Figure \ref{fig:epochs}. Notice that PPO is quite similar to our algorithms and the main difference is that PPO updates along the policy gradient and ours update to match the target distributions. We see that, in PPO, although the sample efficiency improves from the increase of 
the sample reuse,
the performance gets damaged. In contrast, TDL methods avoid this issue and we can safely increase the sample reuse. This is due to the fact that the policy network in TDL learns to match the fixed target distributions, whereas the policy network in PPO updates along the policy gradient of a clipped surrogate objective. In PPO, when iteratively optimizing this objective, more samples are masked by the clipping and the action distributions conditioned on the corresponding state samples may stray away.

\subsection{KL divergence constraints}
\label{chap:exp2}

Our algorithms, like TRPO and PPO, rely on a conservative policy iteration that requires a constraint in the state-action space. This experiment is designed to evaluate how effective these algorithms can enforce such a constraint. TRPO, PPO and our algorithms aim to constrain the maximum KL divergence across the state space. In this experiment, we approximate the maximum KL divergence across the state space by first sampling a holdout set of $2048$ transitions in each iteration and then recording the maximum KL divergence of the action distributions conditioned on each state sample in the holdout set over the iteration.

We show the results in Figure \ref{fig:KL}. 
First, we observe that in our algorithms the maximum KL divergences are effectively bounded by the limits we set while in TRPO the maximum KL divergences can be up to two orders of magnitude larger than the prescribed limits. Second, the maximum KL divergences in our algorithms (especially in TDL-direct) are generally smaller than those of TRPO and PPO, indicating a more conservative policy update in our algorithms. Thus, our algorithms result in a more stably improved performance during the training while achieving a comparable asymptotic performance and sample efficiency. The result indicates that the sample-wise constraint in TDL is more effective in enforcing a global constraint in the state-action space than previous methods.

\section{Conclusion}

We proposed a new method, called \emph{target distribution learning}, to optimize stochastic policies for continuous control. This method proposes target distributions in each iteration and then trains the policy network to match these distributions. It enables a safe increase in the sample reuse to improve the sample efficiency for an on-policy algorithm. We designed three algorithms via this method. These algorithms can effectively impose constraint on the state-action space and avoid the instability problem of some prior policy gradient-based methods. Empirically, our algorithms achieve comparable performances to some state-of-the-art algorithms on a set of continuous control benchmark tasks.

In this paper, we focus on on-policy algorithms and Gaussian distribution for the action distribution. However, \emph{target distribution learning} can be readily extended to off-policy settings, other types of action distributions and other types of constraints in the state-action space. We leave the extension as an interesting future direction.

\section*{Acknowledgement}

The research is supported in part by the National Natural Science Foundation of China Grant 61822203, 61772297, 61632016, 61761146003, and the Zhongguancun Haihua Institute for Frontier Information Technology and Turing AI Institute of Nanjing.

{
\fontsize{8.9pt}{9.9pt} \selectfont
\bibliographystyle{aaai}
\bibliography{reference}
}

\clearpage

\begin{alphasection}


\section{The KL Divergence Constraint}
\label{app:KL_bound}

In this section, we show that the proposed target means in TDL-direct and TDL-ES satisfy the KL divergence constraint.
In other words, the KL divergence between the old and the new action distributions conditioned on each state sample is bounded, if the mean of the new action distribution is specified by the target mean proposed in the algorithms.

Conditioned on a state sample $s_t$, we consider the target mean $\hat{\mu}_t$ and the new standard deviation of the action distribution $\sigma_\theta(s_t)$. 
For simplicity, 
instead of using $(\hat{\mu}_t, \sigma_\theta(s_t))$, we use the relative values $(\mu_t, \sigma_t)$ to specify the new action distribution. 
More precisely, their relationship is as follows: $\hat{\mu}_t = \mu^{\text{old}}(s_t) + \mu_t \sigma^{\text{old}}(s_t), \sigma_\theta(s_t) = \sigma_t \sigma^{\text{old}}(s_t)$.
Given that the probability distribution of the policy is the multivariate Gaussian distribution with diagonal covariance matrix, the KL divergence between the old and the new action distributions conditioned on one sample can be written in terms of the old and the new statistical parameters conditioned on the corresponding state.

\begin{equation}
\begin{aligned}
\text{KL}_t & := \text{KL}(\mathcal{N}(\cdot|\mu^{\text{old}}(s_t), \sigma^{\text{old}}(s_t))||\mathcal{N}(\cdot|\hat{\mu}_t, {\sigma}_\theta(s_t))) \\
& = \dfrac{1}{2} \sum_{i=1}^d \left[ 2 \log \sigma_{ti} + \dfrac{1 + {\mu_{ti}}^2}{{\sigma_{ti}}^2} - 1   \right]
\end{aligned}{}
\end{equation}

where the subscript $i$ indicates the $i$-th element in the vector and $d$ is the dimension of the action space. Notice that the standard deviation of the action distribution contains a state independent component and it typically does not change too much over one iteration. Hence, we assume $\sigma_{ti} \in [1-\epsilon, 1+\epsilon]$ with a small value $\epsilon$. 

For TDL-direct, recall that 
$$\mu_{ti}^2 \le \left( \min(1, \dfrac{\sqrt{2\alpha}}{||y_t||_2}) y_t \right)^2 \le 2\alpha ,$$
where the action sample $a_t$ is obtained from $a_t = \mu^{\text{old}}(s_t) + y_t \sigma^{\text{old}}$ by sampling a $y_t \sim \mathcal{N}(0, I)$. Therefore, we can see that 
$\text{KL}_t$ is bounded, as desired.

\begin{eqnarray}
\nonumber
\text{KL}_t \le & \dfrac{1}{2} \sum_{i=1}^d \left[ 2 \log (1-\epsilon) + \dfrac{1 + 2\alpha}{(1-\epsilon)^2} - 1  \right] \\
= & d \alpha (1+2\epsilon) + o(\epsilon^2) \approx d\alpha
\end{eqnarray}

For TDL-ES, $\mathbb{E}_{y_{ti}} [ \mu_{ti}^2 ] \le \nu^2 \mathbb{E}_{y_{ti}}[y_{ti}^2] = \nu^2$. Therefore, the expected $\text{KL}_t$ can be bounded as follows:

\begin{eqnarray}
\nonumber
\mathbb{E}_{a_t}[\text{KL}_t] \le & \dfrac{1}{2} \sum_{i=1}^d \left[ 2 \log (1-\epsilon) + \dfrac{1 + \nu^2}{(1-\epsilon)^2} - 1  \right] \\
= & \dfrac{1}{2} d \nu^2 (1+2\epsilon) + o(\epsilon^2) \approx \dfrac{1}{2} d \nu^2
\end{eqnarray}


Notice that the policy improvement theorem requires
a worst-case KL bound over all possible states \cite{ilyas2018deep}.
TDL-direct and TDL-ES enforces the KL constraint for each state sample. In contrast, TRPO bounds the mean KL based on the states observed in the iteration and PPO disincentivizes the update that violates the constraint. Our previous experiment in Figure \ref{fig:KL} verifies that the constraint enforced by our algorithms is closest to that required by the policy improvement theorem, compared with TRPO and PPO.


\section{Target Distribution for TDL-ES as a Policy Improvement Step}
\label{app:ES_theory}

Policy iteration alternates between the policy evaluation step and the policy improvement step. In each iteration, the policy improvement step in our algorithms aims to propose target statistical parameters that optimize a constrained surrogate objective and then update the statistical parameters to the targets.



In this section, we study the behavior of TDL-ES in detail, in order to show the following two properties.
First, the target statistical parameters proposed in TDL-ES can be regarded as a gradient ascent step with respect to the probability that the target policy is improved over the old value function (i.e., $L_{t,2}(\mu, \sigma)$).
Second, by assuming the critic is fixed and solving for the fixed point of the action distribution, we show that the policy in TDL-ES tends to approach a conservative stochastic policy, whereas vanilla policy gradient (and other previous methods) tends to approach a greedy deterministic policy. This prevents a premature convergence when the critic is not well learnt which may result from the poor performance of the policy.


\subsection{Targets as a stochastic gradient ascent step}

When updating the target variance, we involve a state independent component to avoid the violation of the constraint. Here, to better analyze the property of the proposed target statistical parameters, we ignore the state independent component which is the average of $\hat{\sigma}_t$ on all the samples in one iteration. 
Hence, the updates of the action distributions conditioned on different states are independent and we only need to analyze the case on one state.
In another word, given a state $s_t$, we analyze the proposed target statistical parameters $\hat{\sigma}_t$ and $\hat{\mu}_t$ (cf. \eqref{eq:state_dependent_variance} and \eqref{eq:mean_ES}). 

We assume the advantage function $A^{\pi^{\text{old}}}(s_t, a)$ is given by the difference of the action value function $Q^{\pi^{\text{old}}}(s_t, a)$ and the state value $V^{\pi^{\text{old}}}(s_t)$ instead of GAE used in real algorithms. Since the analysis applies to different states independently, for simplicity, we use $Q(a)$ as shorthand for $Q^{\pi^{\text{old}}}(s_t, a)$ and ${V}$ as shorthand for $V^{\pi^{\text{old}}}(s_t)$.



Before we present the result, let us give several definitions. We define $F(a) := \mathbb{I} \{Q(a) - {V} > 0\}$ and $G(\mu, \sigma) := \mathbb{E}_{y \sim \mathcal{N}(0, I)} \left[ F(\mu + y \sigma) \right]$, where $\mathbb{I}(\cdot)$ is the indicator function. The function $F$ is an indicator of whether the advantage is positive or negative. The function $G$ is obtained from $F$ by smoothing over $F$ with a Gaussian filter whose standard deviation is $\sigma$, which is a shorthand for $L_{t,2}$. We illustrate their relationship in Figure \ref{fig:QFG}. 

\begin{figure}[htbp]
	\centering
	\includegraphics[width=\columnwidth]{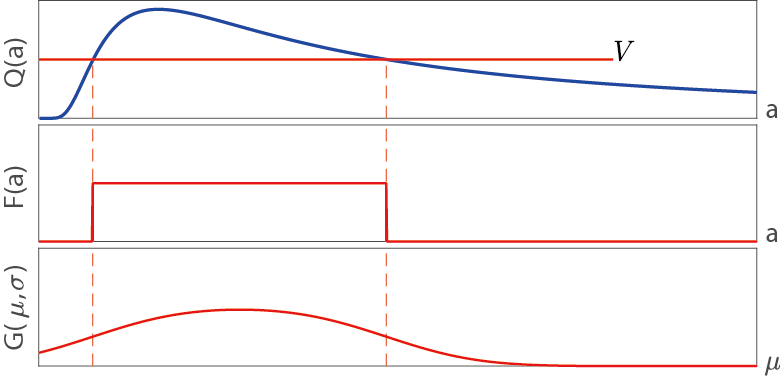}
	\caption{The functions $Q(a)$, $F(a)$ and $G(\mu, \sigma)$ for a given $\sigma$.}
	\label{fig:QFG}
\end{figure}

Suppose the old action distribution on this state is $\mathcal{N}(\mu, \sigma)$. The proposed target statistical parameters $(\hat{\mu}_t, \hat{\sigma}_t)$ are Monte Carlo estimates of $(\mu', \sigma')$, where

\begin{equation}
\label{eq:expected_target}
\begin{aligned}
\mu' & = \mathbb{E}_{a \sim \mathcal{N}(\mu, \sigma)} \Big[ \mu \mathbb{I} \{ Q(a) \le {V} \} \\
& \quad \quad + ( \mu + \nu (a - \mu) ) \mathbb{I} \{ Q(a) > {V} \}  \Big] \\
(\sigma')^2 & = \mathbb{E}_{a \sim \mathcal{N}(\mu, \sigma)} \Big[ \sigma^2 \mathbb{I} \{ Q(a) \le {V} \} \\
& \quad \quad + (a - \mu)^2 \mathbb{I} \{ Q(a) > {V} \} \Big]
\end{aligned}
\end{equation}

Our result is that the target statistical parameters proposed in TDL-ES is actually a stochastic gradient ascent step with respect to $G$. 

\begin{theorem}
\label{theorem1}
The target statistical parameters $(\hat{\mu}_t, \hat{\sigma_t})$ defined in \eqref{eq:state_dependent_variance} and \eqref{eq:mean_ES} are Monte Carlo estimates of $(\mu', \sigma')$ defined in \eqref{eq:expected_target}. $(\mu', \sigma')$ is a gradient ascent update of $(\mu, \sigma)$ with respect to $G(\mu, \sigma)$. Specifically,
\begin{align}
\label{eq:one_step_update}
    \begin{cases}
    \mu' & = \mu + \nu \sigma^2 \nabla_\mu G(\mu, \sigma) \\
    (\sigma')^2 & = \sigma^2 + 2 \sigma^4 \nabla_{\sigma^2} G(\mu, \sigma) \\
    \end{cases}.
\end{align}
\end{theorem}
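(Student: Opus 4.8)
The plan is to split the statement into its two assertions. The first---that $(\hat\mu_t,\hat\sigma_t^2)$ in \eqref{eq:mean_ES} and \eqref{eq:state_dependent_variance} are unbiased Monte Carlo estimates of $(\mu',(\sigma')^2)$ in \eqref{eq:expected_target}---is routine: I would take the expectation over the single sample $a_t\sim\mathcal N(\mu^{\text{old}}(s_t),\sigma^{\text{old}}(s_t))$, use that under the exact-advantage assumption $\mathbb I\{\hat A_t>0\}=\mathbb I\{Q(a_t)>V\}=F(a_t)$, and invoke linearity of expectation. The substantive content is the second assertion, the gradient-ascent identity \eqref{eq:one_step_update}, and this is where I would spend the work.

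The key move is to rewrite $G$ as an expectation against the action distribution itself. Substituting $a=\mu+y\sigma$ coordinatewise (legitimate since the covariance is diagonal) turns $G(\mu,\sigma)=\mathbb E_{y\sim\mathcal N(0,I)}[F(\mu+y\sigma)]$ into $G(\mu,\sigma)=\mathbb E_{a\sim\mathcal N(\mu,\sigma)}[F(a)]=\int F(a)\,p(a;\mu,\sigma)\,da$, where $p$ is the diagonal Gaussian density. I would then differentiate under the integral sign, moving the derivative onto the smooth density via the score-function identities $\nabla_{\mu_i}\log p=(a_i-\mu_i)/\sigma_i^2$ and $\nabla_{\sigma_i^2}\log p=\big((a_i-\mu_i)^2-\sigma_i^2\big)/(2\sigma_i^4)$. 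This produces the two Stein-type identities $\sigma_i^2\,\nabla_{\mu_i}G=\mathbb E_a[F(a)(a_i-\mu_i)]$ and $2\sigma_i^4\,\nabla_{\sigma_i^2}G=\mathbb E_a[F(a)\big((a_i-\mu_i)^2-\sigma_i^2\big)]$, which are the heart of the argument.

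With these in hand the remainder is bookkeeping. Expanding the mean target and using that the two indicators sum to $1$ gives $\mu'_i=\mu_i+\nu\,\mathbb E_a[(a_i-\mu_i)F(a)]$, and substituting the first identity yields $\mu'_i=\mu_i+\nu\sigma_i^2\nabla_{\mu_i}G$. For the variance, writing $\mathbb E_a[\mathbb I\{Q\le V\}]=1-G$ gives $(\sigma'_i)^2=\sigma_i^2(1-G)+\mathbb E_a[(a_i-\mu_i)^2F(a)]$; rearranging the second identity shows that $\sigma_i^2+2\sigma_i^4\nabla_{\sigma_i^2}G$ equals exactly this same expression. Matching coordinatewise then delivers \eqref{eq:one_step_update}.

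The hard part will be justifying the interchange of differentiation and integration, since $F$ is a discontinuous indicator. The point I would stress is that all the $\mu$- and $\sigma$-dependence resides in the smooth density $p$, not in $F$: because $F$ is bounded in $[0,1]$ and the relevant derivatives of $p$ (the density multiplied by a polynomial-in-$a$ score factor) admit an integrable Gaussian-tailed dominating function uniformly on a neighborhood of $(\mu,\sigma)$, the dominated convergence theorem legitimizes differentiating under the integral. In particular no smoothness of $F$---equivalently, of $Q(\cdot)$---is required, which is precisely what makes the identity robust.
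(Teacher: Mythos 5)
Your proposal is correct and takes essentially the same route as the paper: your Stein-type score-function identities are exactly the paper's Lemma~\ref{lemma1} rewritten via the change of variables $a=\mu+y\sigma$, and your subsequent bookkeeping reproduces the paper's computation of $\mu'-\mu$ and $(\sigma')^2-\sigma^2$ term by term. Your explicit dominated-convergence justification for differentiating under the integral (placing all parameter dependence in the smooth density so that no smoothness of $F$ is needed) is a point the paper leaves implicit when it applies Lemma~\ref{lemma1} to the discontinuous indicator, but it is a refinement of the same argument rather than a different one.
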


From this point on, we use $\mathbb{E}$ to refer $\mathbb{E}_{y \sim \mathcal{N}(0, I)}$.

\begin{lemma}
\label{lemma1}
For vectors $\mu, \sigma, y \in \mathbb{R}^d$ and a function $f:\mathbb{R}^d \to \mathbb{R}$,
$$
\nabla_\mu \mathbb{E} \left[ f(\mu + y\sigma) \right] = \dfrac{1}{\sigma} \mathbb{E} \left[ y f(\mu + y\sigma) \right],
$$
$$
\nabla_{\sigma^2} \mathbb{E} \left[ f(\mu + y\sigma) \right] = \dfrac{1}{2\sigma^2} \mathbb{E} \left[ (y^2 - 1) f(\mu + y\sigma) \right].
$$
\end{lemma}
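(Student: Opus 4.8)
The plan is to recognize both identities as instances of the score-function (log-derivative) trick for Gaussian expectations, so that the derivatives fall on the smooth Gaussian density rather than on $f$. This is the crucial feature here, since in the application $f = F$ is an indicator and is not differentiable, so any route that differentiates $f$ directly (e.g.\ integration by parts against $f$) is unavailable.

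First I would change variables from $y$ to $x = \mu + y\sigma$ componentwise, so that $y \sim \mathcal{N}(0, I)$ corresponds to $x \sim \mathcal{N}(\mu, \mathrm{diag}(\sigma^2))$ and the expectation becomes $\mathbb{E}[f(\mu + y\sigma)] = \int f(x)\, p(x; \mu, \sigma)\, dx$, where $p(x;\mu,\sigma) = \prod_{i=1}^d (2\pi\sigma_i^2)^{-1/2}\exp(-(x_i - \mu_i)^2/(2\sigma_i^2))$ is the diagonal Gaussian density. Next I would differentiate under the integral sign and apply $\nabla_\theta p = p\, \nabla_\theta \log p$ for $\theta \in \{\mu, \sigma^2\}$, which turns each gradient into $\mathbb{E}_x[f(x)\, \nabla_\theta \log p(x;\mu,\sigma)]$.

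A direct computation of the Gaussian score functions then gives $\partial_{\mu_i} \log p = (x_i - \mu_i)/\sigma_i^2$ and, treating the variance $\sigma_i^2$ as the differentiation variable, $\partial_{\sigma_i^2}\log p = \frac{1}{2\sigma_i^2}\big((x_i-\mu_i)^2/\sigma_i^2 - 1\big)$. Undoing the change of variables via $x_i - \mu_i = y_i \sigma_i$ reduces these factors to $y_i/\sigma_i$ and $\frac{1}{2\sigma_i^2}(y_i^2 - 1)$ respectively, so that the $i$-th component of each gradient reads $\frac{1}{\sigma_i}\mathbb{E}[y_i f(\mu+y\sigma)]$ and $\frac{1}{2\sigma_i^2}\mathbb{E}[(y_i^2-1)f(\mu+y\sigma)]$, which are exactly the two claimed identities written componentwise.

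The main obstacle is justifying the interchange of $\nabla_\theta$ and the integral, which needs a dominated-convergence argument and hence mild integrability/growth conditions on $f$; I would assume that $f$ and the products $y f$, $(y^2-1) f$ are integrable against the Gaussian, with bounds holding locally uniformly in $(\mu,\sigma)$, so the interchange is valid. The conceptual payoff — and the reason this route is preferred over any integration by parts against $f$ — is that the log-derivative form differentiates only the smooth density, so no smoothness of $f$ is required, which is precisely what lets the lemma be applied to the non-differentiable indicator $F$ when deducing Theorem~\ref{theorem1}.
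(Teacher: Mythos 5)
Your proof is correct: both score-function computations check out ($\partial_{\mu_i}\log p = (x_i-\mu_i)/\sigma_i^2$ and $\partial_{\sigma_i^2}\log p = \frac{1}{2\sigma_i^2}\bigl((x_i-\mu_i)^2/\sigma_i^2 - 1\bigr)$ reduce to the claimed factors $y_i/\sigma_i$ and $\frac{1}{2\sigma_i^2}(y_i^2-1)$ under $x_i-\mu_i = y_i\sigma_i$), and your observation that the derivative must fall on the Gaussian density rather than on $f$ is exactly what makes the lemma usable for the non-smooth indicator $F$ in Theorem~\ref{theorem1}. The paper gives no proof of this lemma at all---it only remarks that the first identity appears in \cite{salimans2017evolution}---and your log-derivative derivation is precisely the standard argument underlying that citation, so your route coincides with the paper's implicit one.
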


The first equation in the lemma is also used in \cite{salimans2017evolution}.

\begin{proof}[Proof of Theorem \ref{theorem1}]
The theorem can be proved directly using Lemma \ref{lemma1}.

\begin{align}
\label{eq:mu_update}
\mu' -  \mu 
& = \nu \sigma \mathbb{E} \left[ y F(\mu + y\sigma) \right] \nonumber\\
& = \nu \sigma^2 \nabla_\mu G(\mu, \sigma)
\end{align}

\begin{align}
\label{eq:sigma_update}
\dfrac{\sigma'^2 - \sigma^2}{\sigma^2} & = \mathbb{E} \left[ (y^2 - 1) F(\mu + y \sigma) \right] \nonumber\\
& = 2 \sigma^2 \nabla_{\sigma^2} G(\mu, \sigma) 
\end{align}

\end{proof}

\begin{figure}[htbp]
   \centering
   \includegraphics[width=\columnwidth]{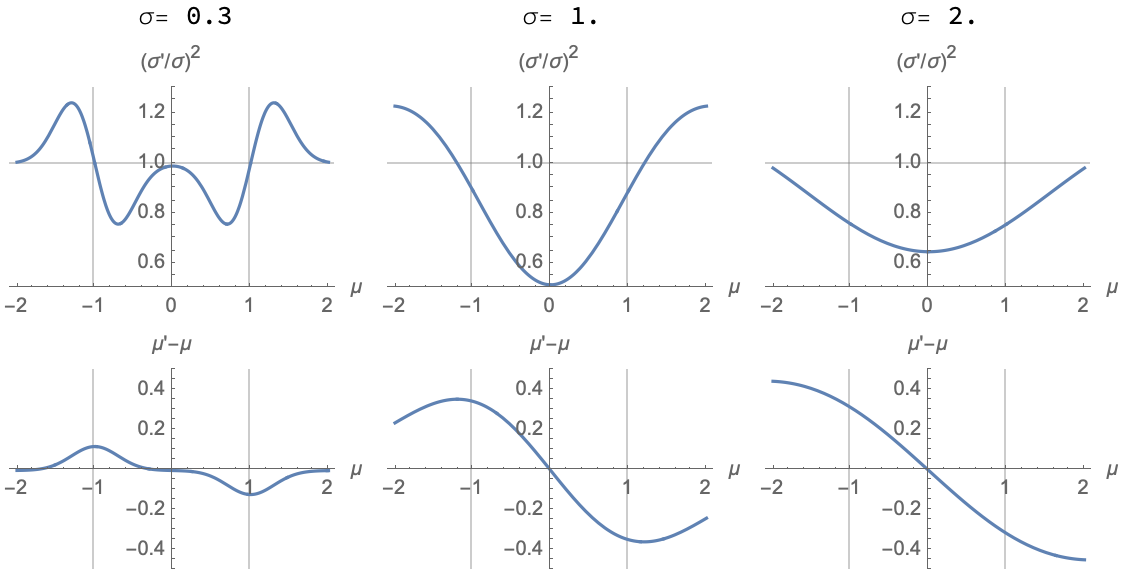}    
   \caption{The target statistical parameters varies with respect to the old statistical parameters when $Q(a)=-a^2$ and ${V}=-1$. The top row presents the value of the target variance $(\sigma')^2$ (relative to the old variance $\sigma^2$) and the bottom row presents the value of the target mean $\mu'$ (relative to the old mean $\mu$). The figures in the three columns show different situations when the old standard deviation $\sigma$ varies.}
   \label{fig:theory}
\end{figure}

Figure \ref{fig:theory} shows how the values of the target statistical parameters $(\mu', \sigma')$ varies with respect to different $\mu$ and $\sigma$. We show three cases with different $\sigma$ which stands for different tendency of exploration in the figure from left to right. It can be seen that the target standard deviation is automatically adapted in different situations and the target mean is always approaching the optimal.

\subsection{Tendency of the action distribution update}

To analyze the tendency of the action distribution update in TDL-ES, we consider a virtual process where the action distribution is iteratively updated following the rule of TDL-ES, instead of updating only once. This helps us to understand where the action distribution will be updated to. The process alternates between sampling a sufficient number of action samples from the current action distribution, querying the value of a fixed value function and updating the action distribution following the update rule of TDL-ES. 

Consider only one state as above, and assume the critic is fixed in the process. Define $D:=\{a|Q(a) > {V}\}$ which represents the \say{good} areas in the action space indicated by the critic.
We show that, by iteratively updating following the rule of TDL-ES, the action distribution will approach a distribution that roughly spans over $D$. This indicates that the target mean proposed by TDL-ES approaches the  \say{center} of $D$ and the target standard deviation approaches the (truncated) \say{radius} of $D$.

\begin{theorem}
\label{theorem}
Consider a fixed state value function ${V}$ and a fixed action value function $Q(a)$. The action distribution is iteratively updated following the update rule of TDL-ES as follows:

\begin{equation}
\begin{aligned}
\mu^{(k+1)} & = \mathbb{E}_{a \sim \mathcal{N}(\mu^{(k)}, \sigma^{(k)})} \Big[ \mu^{(k)} \mathbb{I} \{ Q(a) \le {V} \} \\
& \quad \quad + ( \mu^{(k)} + \nu (a - \mu^{(k)}) ) \mathbb{I} \{ Q(a) > {V} \} \Big] \\
(\sigma^{(k+1)})^2 & = \mathbb{E}_{a \sim \mathcal{N}(\mu^{(k)}, \sigma^{(k)})} \Big[ (\sigma^{(k)})^2 \mathbb{I} \{ Q(a) \le {V} \} \\
& \quad \quad + (a - \mu^{(k)})^2 \mathbb{I} \{ Q(a) > {V} \} \Big]
\end{aligned}
\end{equation}


The fixed point of the action distribution $(\mu, \sigma)$ satisfies the following equations:
\begin{align}
\label{eq:fixedpoint}
    \begin{cases}
    \mu & = \dfrac{\int_{x \in D} xf(x)\mathrm{d}x}{\int_{x \in D} f(x) \mathrm{d}x} \\
    \sigma^2 & = \dfrac{\int_{x \in D} (x-\mu)^2 f(x)\mathrm{d}x}{\int_{x \in D} f(x) \mathrm{d}x} \\
    \end{cases},
\end{align}
where $D = \{ x|Q(x)>{V} \}$ and $f(x) = \exp\left(-\frac{(x-\mu)^2}{2\sigma^2}\right)$.
\end{theorem}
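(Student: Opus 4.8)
The plan is to impose the stationarity condition $(\mu^{(k+1)},\sigma^{(k+1)}) = (\mu^{(k)},\sigma^{(k)}) = (\mu,\sigma)$ directly in the two update rules and reduce each to the claimed integral identity, where all expectations are taken under the fixed-point Gaussian $\mathcal{N}(\mu,\sigma)$ itself. Throughout I write $D = \{a : Q(a) > V\}$ and assume the boundary $\{a : Q(a) = V\}$ has Lebesgue measure zero, so that $\mathbb{I}\{Q(a)\le V\}$ and $\mathbb{I}\{Q(a)>V\}$ partition the sample space almost surely. The payoff of this framing is that the two claimed equations are exactly the self-consistency relations stating that $(\mu,\sigma)$ equals the mean and the about-$\mu$ second moment of the very Gaussian it parameterizes, truncated (in the kernel-weighted sense) to $D$.

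For the mean equation I would first use that the two indicators sum to $1$ to collapse the two terms carrying the coefficient $\mu$ back into $\mu$ itself, so that the update reads $\mu^{(k+1)} = \mu^{(k)} + \nu\,\mathbb{E}[(a-\mu^{(k)})\mathbb{I}\{a\in D\}]$. Substituting $\mu^{(k+1)} = \mu^{(k)} = \mu$ gives $\mu = \mu + \nu\,\mathbb{E}_{a\sim\mathcal{N}(\mu,\sigma)}[(a-\mu)\mathbb{I}\{a\in D\}]$, and since $\nu > 0$ this is equivalent to $\mathbb{E}_{a\sim\mathcal{N}(\mu,\sigma)}[(a-\mu)\mathbb{I}\{a\in D\}] = 0$. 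Writing this expectation as an integral of $(x-\mu)$ against the Gaussian density, which factors as a constant normalizer independent of $x$ times the kernel $f(x) = \exp(-\frac{(x-\mu)^2}{2\sigma^2})$, the normalizer is common to $\int_{x\in D} x f\,\mathrm{d}x$ and $\mu\int_{x\in D} f\,\mathrm{d}x$ and cancels, yielding the first line of \eqref{eq:fixedpoint}.

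For the variance equation I would split the expectation over the events $\{a\in D\}$ and $\{a\notin D\}$. The $\{a\notin D\}$ piece contributes $\sigma^2\,\mathbb{E}[\mathbb{I}\{a\notin D\}]$; subtracting it from both sides of the stationarity condition $\sigma^2 = \sigma^2\,\mathbb{E}[\mathbb{I}\{a\notin D\}] + \mathbb{E}[(a-\mu)^2\mathbb{I}\{a\in D\}]$ leaves $\sigma^2\,\mathbb{E}[\mathbb{I}\{a\in D\}] = \mathbb{E}_{a\sim\mathcal{N}(\mu,\sigma)}[(a-\mu)^2\mathbb{I}\{a\in D\}]$. Expressing both $\mathbb{E}[\mathbb{I}\{a\in D\}]$ and the truncated second moment as integrals against the same Gaussian density and cancelling the shared normalizer gives $\sigma^2\int_{x\in D} f\,\mathrm{d}x = \int_{x\in D}(x-\mu)^2 f\,\mathrm{d}x$, the second line of \eqref{eq:fixedpoint}. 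In the diagonal multivariate case $f$ factorizes across coordinates, so the identical derivation runs componentwise and the vector relations follow.

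The computation is essentially bookkeeping, so I do not anticipate a genuine obstacle. The one point that needs care is that the weight in every expectation is the fixed-point Gaussian $\mathcal{N}(\mu,\sigma)$ whose parameters are exactly what is being solved for, so the cancellation of normalizers must rely on numerator and denominator being integrated against the \emph{same} kernel $f$; and one should keep in mind that these relations are necessary conditions characterizing any fixed point rather than an existence/uniqueness claim. This self-consistency is precisely what justifies reading $\mu$ as the \say{center} and $\sigma$ as the truncated \say{radius} of $D$.
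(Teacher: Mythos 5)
Your proof is correct, and it takes a genuinely different --- and more elementary --- route than the paper's. The paper reaches \eqref{eq:fixedpoint} through its gradient machinery: it invokes Theorem \ref{theorem1} to write $\mu' - \mu = \nu\sigma^2\nabla_\mu G(\mu,\sigma)$, manipulates $\mathbb{E}\left[(y^2-1)F(\mu+y\sigma)\right]$ into $\sigma^2\nabla^2_\mu G(\mu,\sigma)$, and then applies Lemmas \ref{lemma2} and \ref{lemma3} (gradient and Laplacian of the indicator function, which implicitly require enough regularity of $D$ for the underlying integration-by-parts arguments) to convert stationarity into $\int_{x\in D}\nabla_x f(x)\,\mathrm{d}x = 0$ and $\int_{x\in D}\nabla^2_x f(x)\,\mathrm{d}x = 0$, finishing by substituting the derivatives of the Gaussian kernel. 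You instead impose $(\mu^{(k+1)},\sigma^{(k+1)}) = (\mu^{(k)},\sigma^{(k)})$ directly in the update equations, use that the two indicators sum to one, and cancel the Gaussian normalizer common to numerator and denominator; this needs no appeal to Lemmas \ref{lemma1}--\ref{lemma3} and no smoothness or boundary hypotheses on $D$. What the paper's longer route buys is interpretation: it identifies fixed points of the iteration with critical points of the smoothed objective $G$ (i.e., $L_{t,2}$), tying Theorem \ref{theorem} back to the gradient-ascent picture of Theorem \ref{theorem1}. What your route buys is brevity and robustness --- notably, it sidesteps a slip in the paper's final substitution, which states $\nabla^2_x f(x) = \frac{(x-\mu)^2}{\sigma^4} f(x)$ and omits the $-\frac{1}{\sigma^2} f(x)$ term; taken literally, that substitution would force $\int_{x\in D}(x-\mu)^2 f(x)\,\mathrm{d}x = 0$, whereas your direct computation lands on the correct identity without invoking any Laplacian at all. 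Two small points to tidy: the ratio form of \eqref{eq:fixedpoint} presupposes $\int_{x\in D} f(x)\,\mathrm{d}x > 0$ (if $D$ is Gaussian-null, every $(\mu,\sigma)$ is trivially fixed and the equations degenerate to $0/0$), and your closing appeal to the factorization of $f$ in the diagonal multivariate case is both inaccurate as stated ($D$ itself need not factorize across coordinates) and unnecessary --- each coordinate of the vector stationarity condition can simply be integrated against the joint kernel to yield the componentwise relations; likewise, your measure-zero boundary assumption is superfluous, since $\mathbb{I}\{Q(a)\le V\}$ and $\mathbb{I}\{Q(a)>V\}$ partition the sample space exactly, not merely almost surely.
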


Notice that $f(x) \in (0, 1]$ and the dependency on $\mu$ and $\sigma$ is omitted for briefness. We observe that, upon the fixed point, $\mu$ is a weighted average of the points within $D$ and $\sigma^2$ is a weighted average of $(x-\mu)^2$ for $x \in D$. 

Consider a vanilla policy gradient method under such a setting. 
The fixed point of the action distribution for the method is an action distribution that concentrates on $arg \max_a Q(a)$. In other words, the vanilla policy gradient method has a tendency to approach a deterministic policy regardless of the current state value estimate (i.e., $V$) from the critic network. 
However, the update of TDL-ES is aware of the state value estimate which approximates the performance of the current policy. When the performance is still poor currently, the critic will yield a relatively low state value, resulting in a large region for $D$. Therefore, the variance of the action distribution at the fixed point takes a large value which prevents premature convergence. Moreover, when the critic is not well learned and produces overly large or small values (which is quite common at the start of training) resulting in a very small or very large region for $D$, the variance remains the same in expectation due to the lack of information.

At last, we present two lemmas and use them to prove Theorem \ref{theorem}.

\begin{lemma}[Gradient of indicator function \cite{lange2012potential}] 
\label{lemma2}
For a function $f:\mathbb{R}^d \to \mathbb{R}$ that is continuous and differentiable and $D\subseteq \mathbb{R}^d$, 
$$
    \int_{x \in \mathbb{R}^d} f(x) \nabla_x \mathbb{I}\{x \in D \} dx = - \int_{x \in D} \nabla_x f(x) dx.
$$
\end{lemma}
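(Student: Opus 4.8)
The plan is to recognize the claimed identity as a distributional integration-by-parts (equivalently, a Gauss--Green / divergence theorem statement), the only wrinkle being that $\mathbb{I}\{x \in D\}$ is not differentiable in the classical sense, so $\nabla_x \mathbb{I}\{x \in D\}$ must be read as a weak (distributional) gradient. Once that interpretation is fixed, the shortest route is immediate: treating $f$ as a test function, the defining property of the distributional derivative gives, componentwise,
$$
\int_{\mathbb{R}^d} f(x)\, \partial_{x_i} \mathbb{I}\{x \in D\}\, dx = -\int_{\mathbb{R}^d} \bigl(\partial_{x_i} f(x)\bigr)\, \mathbb{I}\{x \in D\}\, dx = -\int_{x \in D} \partial_{x_i} f(x)\, dx,
$$
and assembling the $d$ components yields the stated vector identity. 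So at the formal level the lemma is nothing more than the definition of the weak derivative of $\chi_D := \mathbb{I}\{\,\cdot \in D\}$ paired against $f$.

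To make this geometric and to check that the orientation is correct, I would identify the distributional gradient explicitly. For a domain $D$ with piecewise $C^1$ boundary, $\nabla \chi_D = -\, n\, \mathcal{H}^{d-1}|_{\partial D}$, where $n$ is the outward unit normal and $\mathcal{H}^{d-1}$ is surface measure; a one-dimensional sanity check with $D=(a,b)$ gives $\chi_D' = \delta_a - \delta_b$, which matches $-n\,\delta_{\partial D}$ since $n=-1$ at $a$ and $n=+1$ at $b$. Substituting,
$$
\int_{\mathbb{R}^d} f\, \nabla_x \chi_D\, dx = -\int_{\partial D} f(x)\, n(x)\, d\mathcal{H}^{d-1}(x).
$$
On the other side, applying the divergence theorem to the vector field $f e_i$ gives $\int_D \partial_{x_i} f\, dx = \int_{\partial D} f\, n_i\, d\mathcal{H}^{d-1}$, that is, $\int_D \nabla f\, dx = \int_{\partial D} f\, n\, d\mathcal{H}^{d-1}$. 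Comparing the two displays closes the argument.

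The main obstacle is purely the regularity bookkeeping that legitimizes these formal steps. First, the integration by parts has no boundary contribution at infinity only if $f$ decays suitably (or is compactly supported); in the paper's usage the relevant $f$ is a Gaussian density, so this is automatic. Second, the divergence-theorem step requires $\partial D$ to be regular enough (Lipschitz suffices). Here $D = \{a : Q(a) > V\}$ is a superlevel set, so one should assume, for instance, that $Q$ is $C^1$ with $\nabla Q \neq 0$ on $\{Q = V\}$; the implicit function theorem then makes $\partial D$ a $C^1$ hypersurface and the surface-measure representation of $\nabla \chi_D$ valid. Granting these mild conditions, the identity follows from the definition of the weak derivative together with Gauss--Green, exactly as above.
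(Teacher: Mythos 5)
Your proof is correct, but there is nothing in the paper to compare it against: the paper states this lemma with only a citation to \cite{lange2012potential} and supplies no proof of its own. Your route --- reading $\nabla_x \mathbb{I}\{x \in D\}$ as a weak derivative, so that the claimed identity is literally the defining integration-by-parts pairing of the distributional gradient of $\chi_D$ against $f$, then confirming the sign via the representation $\nabla \chi_D = -n\, \mathcal{H}^{d-1}|_{\partial D}$ together with Gauss--Green --- is the standard argument and is sound; your one-dimensional check $\chi_{(a,b)}' = \delta_a - \delta_b$ correctly pins down the orientation. You also handle two points the paper glosses over. First, as literally stated (arbitrary $D \subseteq \mathbb{R}^d$, $f$ merely differentiable) the left-hand side is meaningless classically, since $\nabla_x \mathbb{I}\{x \in D\}$ vanishes almost everywhere wherever it exists; the distributional reading you adopt is the only tenable one, and then the pairing is defined a priori only for compactly supported test functions, so extending it to the Gaussian $f$ actually used in the proof of Theorem \ref{theorem} requires decay of $f$ together with $\nabla \chi_D$ being a locally finite measure (e.g., $D$ of locally finite perimeter) --- which your hypothesis that $Q$ is $C^1$ with $\nabla Q \neq 0$ on the level set $\{Q = V\}$ guarantees for the superlevel set $D = \{a : Q(a) > V\}$. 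Second, you implicitly repair an asymmetry in the paper: Lemma \ref{lemma3} assumes $f$ vanishes at infinity while Lemma \ref{lemma2} states no such condition, even though some decay is needed to kill the boundary term at infinity; your observation that the Gaussian makes this automatic is exactly the right scope for how the lemma is deployed.
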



\begin{lemma}[Laplacian of indicator function \cite{lange2012potential}] 
\label{lemma3}
For any continuous and differentiable function $f:\mathbb{R}^d \to \mathbb{R}$ that vanishes at infinity, i.e., $\lim\limits_{\left \| x \right \| \to \infty} f(x) = 0$, and $D\subseteq \mathbb{R}^d$,
$$
    \int_{x \in \mathbb{R}^d} f(x) \nabla^2_x \mathbb{I}\{x \in D \} dx = \int_{x \in D} \nabla^2_x f(x)\mathrm dx.
$$
\end{lemma}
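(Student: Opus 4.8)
The plan is to read the left-hand side in the distributional (weak) sense. Since $\mathbb{I}\{x \in D\}$ is merely bounded and measurable, $\nabla^2_x \mathbb{I}\{x \in D\}$ is not a classical function, and the integral $\int_{\mathbb{R}^d} f(x)\nabla^2_x\mathbb{I}\{x\in D\}\,dx$ denotes the action of the distribution $\nabla^2\mathbb{I}_D$ on the test function $f$. Under this reading the identity to prove is precisely the formal self-adjointness of the Laplacian under the $L^2$ pairing, namely that both Laplacian derivatives may be transferred from the indicator onto $f$:
\[
\int_{\mathbb{R}^d} f\,\nabla^2\mathbb{I}_D\,dx \;=\; \int_{\mathbb{R}^d}\mathbb{I}_D\,\nabla^2 f\,dx \;=\; \int_D \nabla^2 f\,dx ,
\]
where the last equality is immediate since $\mathbb{I}_D$ equals $1$ on $D$ and $0$ elsewhere. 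So the entire content reduces to justifying the first equality.

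I would establish this first equality by reusing Lemma \ref{lemma2} twice rather than re-deriving integration by parts from scratch. Assuming $f \in C^2(\mathbb{R}^d)$ with $f$ and $\nabla f$ vanishing at infinity, I first apply the gradient identity of Lemma \ref{lemma2}, coordinate by coordinate, with the scalar test function replaced by $\partial_i f$; the $i$-th component reads $\int_{\mathbb{R}^d}(\partial_i f)\,\partial_i\mathbb{I}_D\,dx = -\int_D \partial_i^2 f\,dx$, and summing over $i$ yields $\int_{\mathbb{R}^d}\nabla f\cdot\nabla\mathbb{I}_D\,dx = -\int_D\nabla^2 f\,dx$. Then a single integration by parts (Green's first identity, transferring one derivative of the Laplacian) gives $\int_{\mathbb{R}^d} f\,\nabla^2\mathbb{I}_D\,dx = -\int_{\mathbb{R}^d}\nabla f\cdot\nabla\mathbb{I}_D\,dx$, and combining the two displays closes the argument. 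An equivalent and perhaps cleaner route, for $D$ with smooth boundary, is to note that Lemma \ref{lemma2} identifies $\nabla\mathbb{I}_D$ with minus the outward-normal surface measure on $\partial D$; iterating shows that $\nabla^2\mathbb{I}_D$ pairs with $f$ to give $\int_{\partial D}\partial_n f\,dS$, and the claimed identity is then exactly the divergence theorem $\int_{\partial D}\partial_n f\,dS = \int_D\nabla^2 f\,dx$ applied to the vector field $\nabla f$.

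The step I expect to be the main obstacle, and the one that genuinely uses the hypothesis that $f$ vanishes at infinity, is the justification that no boundary terms survive at infinity when the derivatives are moved onto $f$: the distributional derivative identities are defined against compactly supported test functions, whereas here $f$ only decays. I would close this gap with a cutoff argument, multiplying $f$ by a smooth bump $\chi_R$ that is $1$ on the ball of radius $R$ and supported in the ball of radius $2R$, applying the identity to the compactly supported $\chi_R f$, and letting $R\to\infty$. The error terms produced by $\chi_R$ involve $\nabla\chi_R$ and $\nabla^2\chi_R$, which are supported in the annulus $R\le\|x\|\le 2R$ and are $O(1/R)$ and $O(1/R^2)$ respectively, so the decay of $f$ and $\nabla f$ there forces their contributions to $0$. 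A secondary technical point worth flagging is that the lemma writes $\nabla^2 f$ as an ordinary integrand, so I would make explicit the standing assumption that $f$ is twice continuously differentiable (and, for the divergence-theorem route, that $\partial D$ is regular enough for Green's identity), even though the weak formulation itself requires less.
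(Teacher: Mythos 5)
The paper never proves this lemma: it is quoted from \cite{lange2012potential} and used as a black box in the proof of Theorem \ref{theorem}, so there is no in-paper argument to compare yours against. Your reading and structure are the right ones. The distributional interpretation is the only sensible one (the indicator has no classical Laplacian), the identity is precisely the formal self-adjointness of the Laplacian under the $L^2$ pairing, and both of your routes are sound: iterating Lemma \ref{lemma2} coordinate-wise with test functions $\partial_i f$ and then transferring the remaining derivative by Green's first identity, or identifying $\nabla\mathbb{I}_D$ with (minus) the outward-normal surface measure on $\partial D$ and closing with the divergence theorem $\int_{\partial D}\partial_n f\,dS=\int_D\nabla^2_x f\,dx$. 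You are also right to flag the hypotheses the statement silently needs: $f\in C^2$ (otherwise the right-hand side is undefined; the lemma's \emph{continuous and differentiable} is too weak) and, for the surface-measure route, regularity of $\partial D$. Both are harmless in the paper's only application, where $f$ is a Gaussian and $D=\{a\,|\,Q(a)>V\}$.

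One step of your cutoff argument is asserted too optimistically, however. The error terms produced by $\chi_R$ are integrated over (a subset of) the annulus $R\le\|x\|\le 2R$, whose volume grows like $R^d$; so the term involving $f\,\nabla^2\chi_R$ is of order $R^{d-2}\sup_{\|x\|\sim R}|f|$ and the term involving $\nabla\chi_R\cdot\nabla f$ is of order $R^{d-1}\sup_{\|x\|\sim R}|\nabla f|$. Mere pointwise vanishing at infinity --- which is all the lemma hypothesizes, and it says nothing about $\nabla f$ at all --- does not make these tend to $0$ once $d\ge 2$ (consider $f(x)\sim\|x\|^{-\varepsilon}$ with $d>2+\varepsilon$). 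So under the lemma's literal hypotheses your limiting step fails, and indeed the statement as printed is too weak; you need quantitative decay, e.g.\ $|f(x)|=o(\|x\|^{2-d})$ and $|\nabla f(x)|=o(\|x\|^{1-d})$, or integrability of $f$ and $\nabla f$ against the annular growth. This defect is inherited from the paper's informal restatement of \cite{lange2012potential} rather than introduced by you, and it is vacuous for the Gaussian $f(x)=\exp(-\frac{(x-\mu)^2}{2\sigma^2})$ actually used in Theorem \ref{theorem}; but your write-up should state the strengthened decay hypothesis explicitly instead of claiming that the decay of $f$ and $\nabla f$ alone forces the error contributions to vanish.
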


\begin{proof}[Proof of Theorem \ref{theorem}]

Consider an update step $(\mu, \sigma) \to (\mu', \sigma')$. By Theorem \ref{theorem1}, we know that 

\begin{align*}
\mu' - \mu & = \nu \sigma^2 \nabla_\mu G(\mu, \sigma).
\end{align*}

For the variance update, using Lemma \ref{lemma1},

\begin{align*}
& \dfrac{(\sigma')^2 - \sigma^2}{\sigma^2} \nonumber \\
= & \mathbb{E} \left[ (y^2 - 1) F(\mu + y \sigma) \right] \nonumber \\
= & \mathbb{E} \left[ \frac{1}{\sigma} y (\mu + y\sigma) F(\mu + y\sigma) - (\frac{\mu}{\sigma} y + 1) F(\mu + y\sigma) \right] \nonumber \\
= & \nabla_\mu \mathbb{E} \left[ (\mu + y\sigma) F(\mu + y\sigma) \right] \nonumber \\
& \quad \quad - \mu \nabla_\mu \mathbb{E} \left[ F(\mu + y\sigma) \right] - \mathbb{E}[F(\mu + y\sigma)] \nonumber \\ 
= & \mathbb{E} \left[ y\sigma \nabla_\mu F(\mu + y\sigma) \right] \nonumber \\
= & \sigma^2 \nabla_\mu^2 G(\mu, \sigma). \nonumber \\
\end{align*}

Using Lemma \ref{lemma2}, 

\begin{align*}
\label{eq:Gderivative}
\nabla_\mu G(\mu, \sigma)|_\mu & = \nabla_\mu \int_{y \in \mathbb{R}^d} p(y) F(\mu + y\sigma) dy \big|_\mu \nonumber\\
& = \int_{y \in \mathbb{R}^d} p(y) \nabla_\mu F(\mu + y\sigma) \big|_\mu dy \nonumber\\
& = \int_{y \in \mathbb{R}^d} p(y) \nabla_x F(x) \big|_{x=\mu + y\sigma} dy \nonumber\\
& = \int_{x \in \mathbb{R}^d} p\left(\frac{x-\mu}{\sigma}\right) \nabla_x F(x) \frac{1}{\sigma} dx \nonumber\\
& = \frac{1}{(2\pi)^{\frac{d}{2}}\sigma^2} \int_{x \in \mathbb{R}^d} f(x) \nabla_x \mathbb{I} \{x \in D\} dx \nonumber\\
& = - \frac{1}{(2\pi)^{\frac{d}{2}}\sigma^2} \int_{x \in D} \nabla_x f(x) dx, \nonumber\\
\end{align*}
where $f(x) = \exp(-\frac{(x-\mu)^2}{2\sigma^2})$. Notice that $f(x)$ is dependent on $\mu$ and $\sigma$, which are omitted for simplicity.

Using Lemma \ref{lemma3}, 

\begin{align*}
\nabla^2_\mu G(\mu, \sigma)|_\mu & = \int_{y \in \mathbb{R}^d} p(y) \nabla^2_\mu F(\mu + y\sigma)\big|_\mu dy \\
	& = \frac{1}{\sigma}\int_{x \in \mathbb{R}^d} p\left(\frac{x-\mu}{\sigma}\right) \nabla^2_x F(x) dx \\
	& = \frac{1}{(2\pi)^{\frac{d}{2}}\sigma^2} \int_{x \in \mathbb{R}^d} f(x) \nabla^2_x \mathbb{I} \{x \in D\} dx \\
	& = \frac{1}{(2\pi)^{\frac{d}{2}}\sigma^2} \int_{x \in D} \nabla^2_x f(x) dx.
\end{align*}

Upon fixed point, 

$$
\mu' -  \mu = - \frac{\tau}{(2\pi)^{\frac{d}{2}}} \int_{x \in D} \nabla_x f(x) dx = 0,
$$

$$
\dfrac{(\sigma')^2 - \sigma^2}{\sigma^2} = \frac{1}{(2\pi)^{\frac{d}{2}}} \int_{x \in D} \nabla^2_x f(x) dx = 0.
$$

The fixed point can be obtained by substituting $\nabla_x f(x) = -\frac{(x-\mu)}{\sigma^2}f(x)$ and $\nabla^2_x f(x) = \frac{(x-\mu)^2}{\sigma^4}f(x)$ into the above equations.

\begin{align*}
\left\{
\begin{aligned}
\mu \int_{x \in D}f(x) dx  &= \int_{x \in D} xf(x) dx \\
\sigma^2\int_{x \in D} f(x) dx & = \int_{x \in D} (x-\mu)^2 f(x) dx
\end{aligned}
\right.
\end{align*}

Then we can obtain the fixed point (\ref{eq:fixedpoint}), which completes the proof.

\end{proof}


\section{Variance Adaptation}

In this section, we present that the variance of the action distribution is self-adapted in real tasks under the variance update rule of our algorithms. We plot the variance of the action distribution with respect to the training steps in Figure \ref{fig:variance_adaptation}. The results correspond to the experiments shown in Figure \ref{fig:benchmark}. We observe that the variance drops quickly in tasks where our algorithms achieve a good performance, such as \texttt{Reacher} and \texttt{Swimmer}. In contrast, the variance drops slowly or even increases in the tasks where some trouble might be encountered during the learning process, for example plunging into a local minimum. This phenomenon is especially obvious for TDL-direct in \texttt{Hopper} where the algorithm gets stuck but tries to increase the variance to escape. Similarly, comparing the results of TDL-ES and TDL-ESr in \texttt{HumanoidStandup}, the learning curve of TDL-ES is beneath that of TDL-ESr throughout the training while the standard deviation of TDL-ES keeps higher and more fluctuating than TDL-ESr to enlarge exploration.

\begin{figure*}[htbp]
   \centering
   \includegraphics[width=0.8\textwidth]{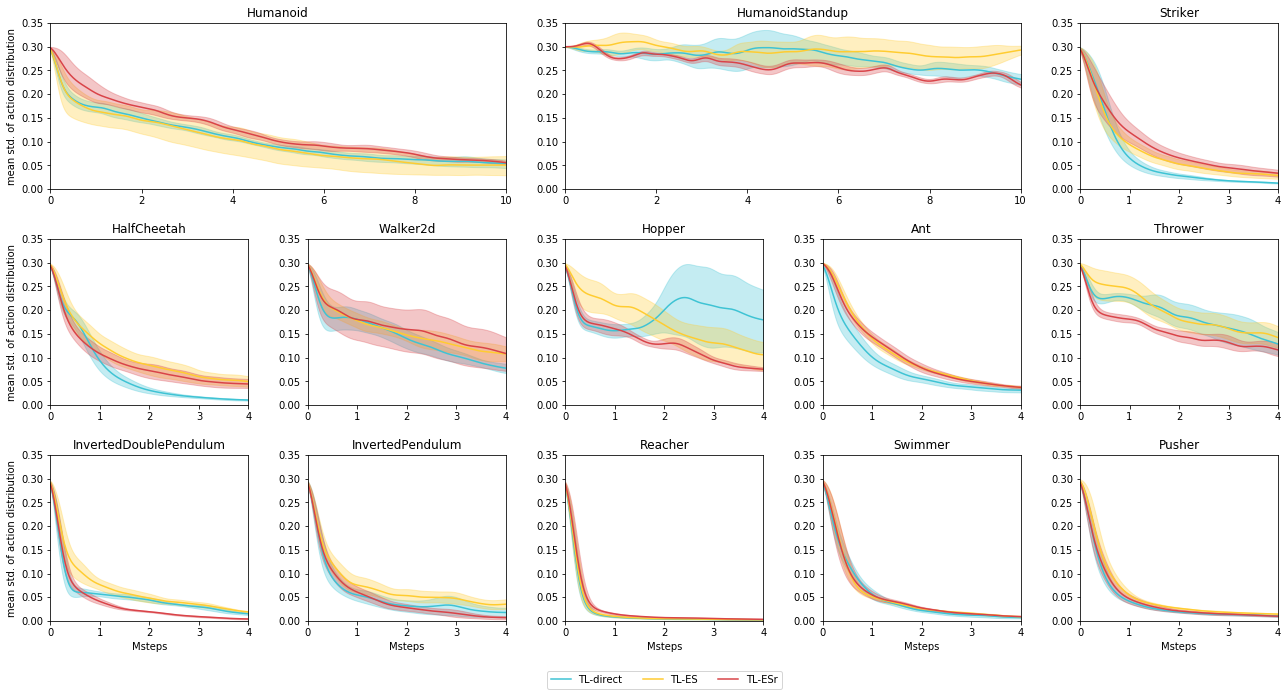}    
   \caption{The standard deviation of the action distribution with respect to the training steps for our algorithms. The shaded areas indicate the areas between the 10\% and 90\% quantiles of the mean standard deviations across five independent runs. This shows that the variance of the action distribution is self-adapted in real tasks under the variance update rule of our algorithms.}
   \label{fig:variance_adaptation}
\end{figure*}

\section{Samples with Negative Advantages}
\label{sec:neg_adv}

Since in most environments the state is not resettable, only one action sample $a_t$ can be obtained from an encountered state $s_t$. An estimator of the advantage $\hat{A}_t$ can be calculated from a critic network and the rewards following $a_t$. This estimator is used to evaluate the action $a_t$. 

In several existing policy gradient methods (such as TRPO and PPO), when $\hat{A}_t > 0$, the stochastic gradient ascent on $\hat{L}_t(\theta)$ updates $\theta$ such that $\mu_\theta(s_t)$ becomes closer to the \say{good} sample $a_t$, which is as expected. When $\hat{A}_t < 0$, it pushes $\mu_\theta(s_t)$ to somewhere along the negative direction, i.e., $\mu^{\text{old}}(s_t) - \lambda a_t$ for some $\lambda > 0$. This assumes the linearity around the mean of the old action distribution \cite{hamalainen2018ppo}. If the assumption holds, this may preserve the useful information and help exploration. Otherwise, pushing the action distribution to such an unexplored direction may cause instability.


In TDL method, we can specify whether to set the target mean to the negative direction or to the mean of the old policy when the advantage estimate is negative. In the algorithms that we propose in the paper, TDL-direct sets the target mean to the negative direction when facing a negative advantage estimate, whereas TDL-ES sets the target mean to the mean of the old policy in such a situation. This choice is based on the following experiments.

We compare variants of TDL-direct and TDL-ES. The target mean for TDL-direct and TDL-ES can be written respectively as follows:

\vspace{-0.7em}
\begin{equation}
\hat{\mu}_t = \mu^{\text{old}}(s_t) + f(\hat{A}_t) \min (1, \dfrac{\sqrt{2\alpha}}{||y_t||_2}) y_t \sigma^{\text{old}}, 
\end{equation}

\begin{equation}
\hat{\mu}_t = \mu^{\text{old}}(s_t) +  f(\hat{A}_t) \nu (a_t - \mu^{\text{old}}(s_t)).
\end{equation}

For the \texttt{neg} variant, we use $f(\hat{A}_t) = \text{sign} (\hat{A}_t)$. For the \texttt{old} variant, we use $f(\hat{A}_t) = \mathbb{I}\{ \hat{A}_t > 0 \}$. 

We show the comparison with these variants in Figure \ref{fig:ablation} and observe the following. First, when the policy updates are conservative (such as TDL-direct), updating the mean of the action distribution to the negative direction helps exploration. However, in \texttt{Humanoid}, the \texttt{neg} variant is not as good as the \texttt{old} variant. We guess that, when the action space dimension is large, a sample with a negative advantage estimate does not indicate the opposite direction is a good direction for exploration. Moreover, this may lead to more instability. Second, when the policy are exploratory (such as TDL-ES), updating the mean of the action distribution to the negative direction leads to significant instability.


\begin{figure*}[htbp]
   \centering
   \includegraphics[width=0.8\textwidth]{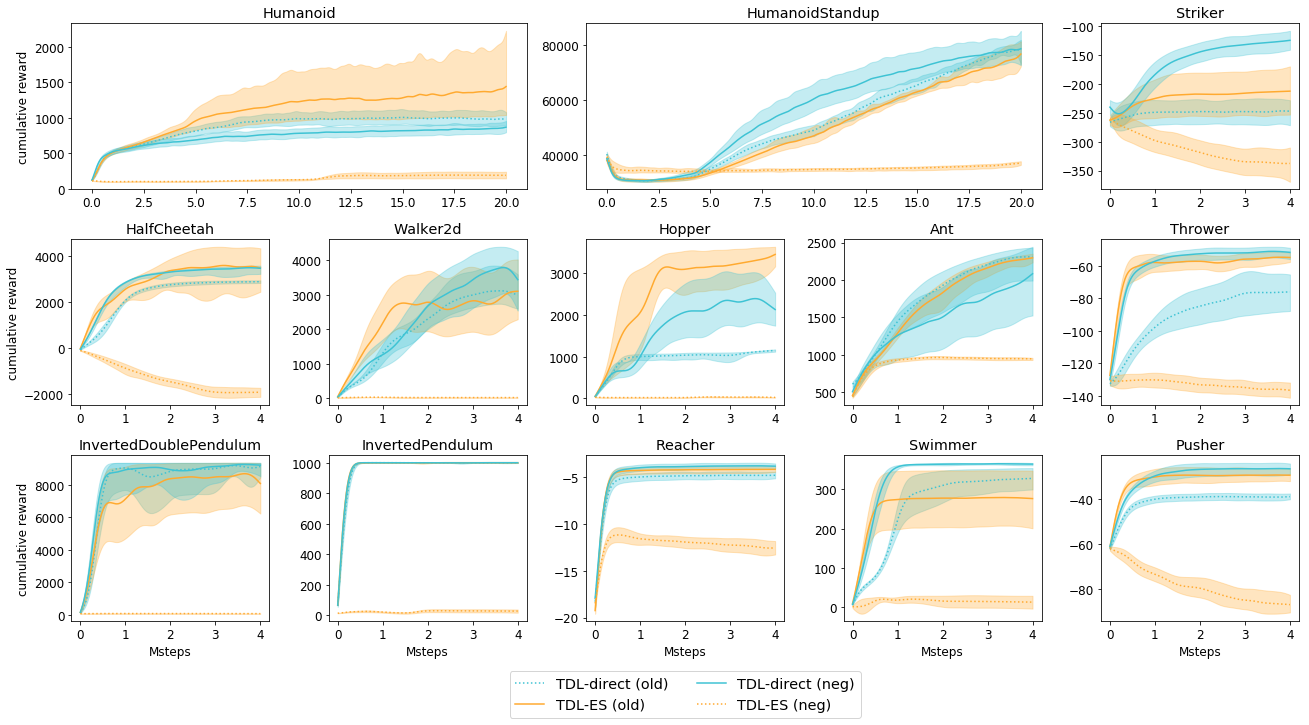}    
   \caption{TDL-ES and TDL-direct compared with the ablated variants. The shaded areas indicate the areas between the 10\% and 90\% quantiles of cumulative rewards across five independent runs. In this set of experiments, we use $\varphi = 0$.}
   \label{fig:ablation}
\end{figure*}

\section{Sensitivity Analysis}

Empirically, we find that the algorithms are most sensitive to the hyperparameters that control the extent to which the policy is allowed to change over iterations. Now, we do the sensitivity analysis for these key hyperparameters.

\begin{figure*}[htbp]
   \centering
   \includegraphics[width=0.5\textwidth]{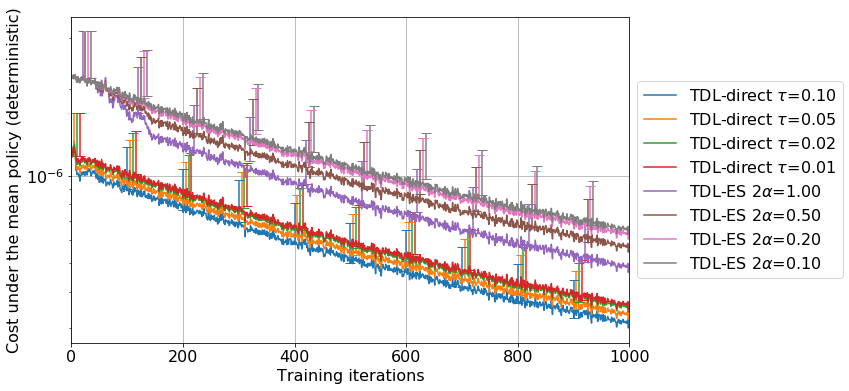}    
   \caption{Median performance of TDL algorithms out of 100 independent runs in the toy environment described in Section \ref{sec:instability}. The error bars indicate the worse 10\% quantiles.}
   \label{fig:toy_sensitivity}
\end{figure*}

In the toy domain described in Section \ref{sec:instability}, we run TDL-direct and TDL-ES with different values for the key hyperparameters. We find that our algorithms are robust to different hyperparameters and the training curves are all stable and smooth. We show the result in Figure \ref{fig:toy_sensitivity}.

We also analyze the hyperparameter sensitivity of TDL algorithms on Mujoco tasks. We show the results in Figure \ref{fig:sensitivity}. In the figure, we show the performances of the algorithms (average score over the first 4M steps) vs. the key hyperparameters on different Mujoco tasks. Notice that the average score is equivalent to the area under the learning curve. We observe that TDL algorithms are robust to different hyperparameters over a wide range.

\begin{figure*}[htbp]
   \centering
   \includegraphics[width=0.9\textwidth]{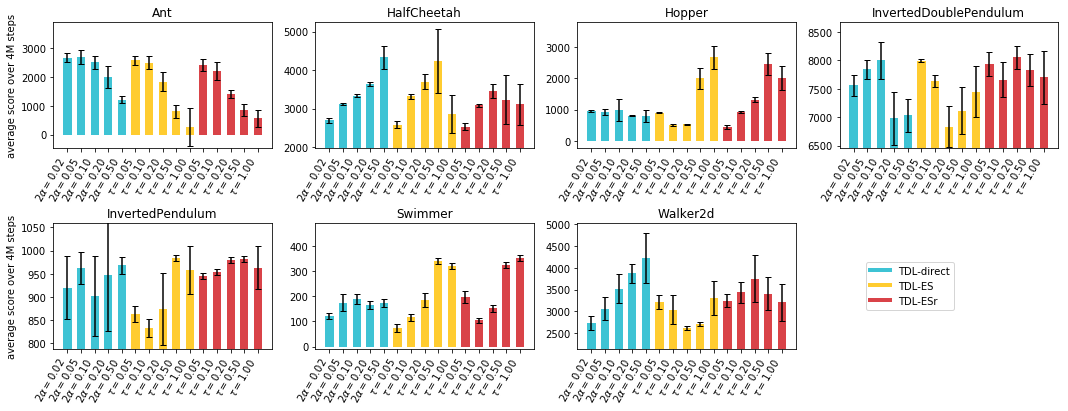}    
   \caption{The average score over the first 4M steps of TDL-direct, TDL-ES and TDL-ESr with different values for the key hyperparameters. The values for the hyperparameters are chosen such that the adjacent values approximately differ by a factor of two. The error bars indicate the standard deviation over five runs with different random seeds.}
   \label{fig:sensitivity}
\end{figure*}

\section{Instability Caused by Gradient Explosion}

In this section, we investigate whether PPO suffers from overly large gradients on Mujoco tasks and whether TDL performs better due to the reason that it avoids exploding gradient. To answer the questions, we record the L2 norm of the gradient during the training for PPO and TDL algorithms on \texttt{InvertedDoublePendulem}. We show the result in Figure \ref{fig:InvertedDoublePendulem}. We observe that the average gradient for PPO is significantly larger than those of TDL algorithms. This shows that TDL algorithms can avoid exploding gradient. Moreover, in PPO, the gradient increases after 500 iterations where the cumulative reward starts to oscillate. In contrast, TDL algorithms are stable and perform better. This indicates that the avoidance of exploding gradient in TDL algorithms is an important reason why it performs better.

\begin{figure*}[htbp]
   \centering
   \includegraphics[width=0.6\textwidth]{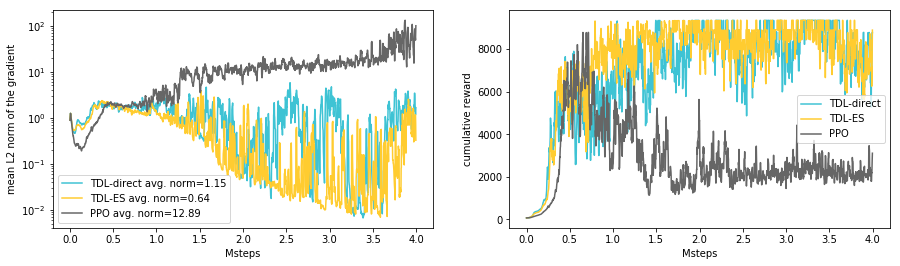}    
   \caption{Left. The gradient norm of the updates in each iteration during the training. The numbers in the legend indicate the average gradient norm over the training process. Right. The performance during the training.}
   \label{fig:InvertedDoublePendulem}
\end{figure*}

\section{Hyperparameters and Implementation Details}
\label{app:hyperparameters}

The hyperparameters in all the experiments for our algorithms accord to what are listed in Table \ref{tab:hyperparameters} unless otherwise stated. All the algorithms run on the \texttt{v2} tasks provided by OpenAI Gym. The source code can be downloaded from \url{https://github.com/targetdistributionlearning/target-distribution-learning/}.

For the experiments in Figure \ref{fig:convex}, the configuration of PPO is the same as the original paper \cite{schulman2017proximal} except that a SGD optimizer and simpler network structures are used. The policy network and the critic network are networks with one 10-neuron hidden layer. The configurations of TDL-direct and TDL-ES are the same as PPO in terms of the network structures, the number of samples in each iteration, the learning rate, etc. We did grid search over the hyperparameters for the remedies. In the remedy with minimum variance, a large minimum variance alleviates oscillation but increases the cost, and we chose the smallest value that can result in stable behaviors. In the remedy with a smaller clip constant, all the clip constants in a wide range result in oscillation, and we chose a value that delays the oscillation most. In the remedy with an entropy term, we chose the entropy coefficient that results in the lowest average cost.

For the experiments in Figure \ref{fig:benchmark}, the hyperparameters of our algorithms are listed in Table \ref{tab:hyperparameters}. The hyperparameters of the previous algorithms are based on the corresponding original papers and the key hyperparameters for these algorithms are tuned individually for each task. The key hyperparameters includes the initial standard deviation $\sigma_0$,
the constraint constant $\delta$ in TRPO, the clipping constant $\epsilon$ in PPO and the KL divergence constraint $\epsilon$ in MPO. We performed a grid search over these hyperparameters and selected the best for each task.


For the experiment in Figure \ref{fig:epochs}, the initial standard deviation of the action distribution is set to $1.0$ which is the same as that used in the compared PPO and \texttt{mu2\_max}($=2\alpha$) is set to $1.0$ to highlight the effect of TDL methods instead of TDL-direct. Moreover, to be fair in comparison with PPO where the variance of the action distribution is state independent, we use $\varphi=0$ in this set of experiments.


\begin{table*}[htbp]
  \caption{Hyperparameters in the experiments}
  \label{tab:hyperparameters}
  \centering
  \begin{tabular}{l|l}
    \toprule
    Hyperparameter & Value \\ 
    \midrule
    Policy network & 3 hidden layers with 64 neurons each \\
    Critic network & 3 hidden layers with 64 neurons each \\
    Num. steps in each iteration & 2048 \\
    Discount rate ($\gamma$) & 0.995 \\
    GAE parameter ($\lambda$) & 0.97 \\
    Num. epochs & 60 \\
    Minibatch size & 256 \\
    Adam learning rate & $1\times 10^{-4}$ \\
    Initial standard deviation of the action distribution ($\sigma_0$)& 0.3 \\
    \texttt{mu2\_max} ($=2\alpha$) in TDL-direct & $0.05$ \\ 
    Step size $\tau$ in TDL-ES and TDL-ESr & 1.0 (0.05 for \texttt{Ant} and \texttt{Humanoid}, \\
    & 0.5 for \texttt{HumanoidStandup}) \\
    Num. adjacent point $N$ in TDL-ESr & 2 (5 for \texttt{HumanoidStandup}) \\
    Revising ratio $r$ in TDL-ESr & 0.1 (1.0 for \texttt{HumanoidStandup})\\
    $\varphi$ for the variance update & 1.0 \\
    \bottomrule
  \end{tabular}
\end{table*}

\end{alphasection}

\end{document}